\documentclass[journal,twoside,web]{ieeecolor}
\usepackage{generic} 

\usepackage{import} 

\usepackage{amsmath,amssymb,amsthm,mathtools,bm}
\usepackage[cmyk,rgb]{xcolor}
\usepackage{graphicx}
\usepackage{cite}
\usepackage{url}
\usepackage{microtype}
\usepackage{comment}
\usepackage{verbatim}
\usepackage{adjustbox}
\usepackage{grffile} 
\usepackage{pgfplots}
\pgfplotsset{compat=newest}
\usepackage{tikz}
\usetikzlibrary{
  arrows.meta, calc, intersections, automata, positioning,
  shapes, decorations.pathmorphing, decorations.shapes,
  decorations.text, angles, quotes, patterns, shadings
}
\usepackage{booktabs}

\usepackage[ruled,vlined,linesnumbered]{algorithm2e}

\usepackage[font=footnotesize,labelfont=bf]{caption}
\makeatletter
\long\def\@makecaption#1#2{%
  \vskip\abovecaptionskip
  \footnotesize \textbf{#1.} #2\par
  \vskip\belowcaptionskip}
\makeatother

\usepackage[hidelinks]{hyperref}
\hypersetup{
    colorlinks=true,
    linkcolor=blue,
    citecolor=red,
}

\tikzset{
  pil/.style={
    ->,
    thick,
    shorten <=2pt,
    shorten >=2pt
  }
}

\theoremstyle{plain} 
\newtheorem{theorem}{Theorem}
\newtheorem{lemma}{Lemma}
\newtheorem{prop}{Proposition}
\newtheorem{coroll}{Corollary}

\theoremstyle{definition}
\newtheorem{defn}{Definition}

\newtheorem{algorthm}{Algorithm}

\newtheorem*{problem*}{Problem}

\theoremstyle{remark}
\newtheorem{remarkx}{Remark}
\newenvironment{remark}
  {\noindent \pushQED{\qed} \remarkx}
  {\popQED\endremarkx}

\newtheorem{examplex}{Example}
\newenvironment{example}
  {\noindent \pushQED{\qed} \examplex}
  {\popQED\endexamplex}

\newcommand{\scalemath}[2]{\scalebox{#1}{\mbox{\ensuremath{\displaystyle #2}}}}

\definecolor{applegreen}{rgb}{0.0, 0.7, 0.0}

\useRomanappendicesfalse

\title{\LARGE \bf Leaderless Collective Motion in Affine Formation Control over the Complex Plane}

\author{Jesus Bautista, \IEEEmembership{Student, IEEE}, Enric Morella, \IEEEmembership{Student, IEEE}, Lili Wang, \IEEEmembership{Member, IEEE}, \\ and Hector Garcia de Marina, \IEEEmembership{Member, IEEE} %
    \thanks{This work is supported by the RYC2020-030090-I grant from the Spanish Ministry of Science, the ERC Starting Grant \emph{iSwarm} 101076091, and the National Natural Science Foundation of China under Grant No. 62403233. J. Bautista, E. Morella and H.G. de Marina are with the Department of Computer Engineering, Automation, and Robotics, and with IMAG, University of Granada, Spain. {\tt\small \{jesusbv, emorella, hgdemarina\}@ugr.es}. L. Wang is  with Shenzhen Key Laboratory of Control Theory and Intelligent Systems, School of Automation and Intelligent Manufacturing, Southern University of Science and Technology, Shenzhen, China, 518055. {\tt\small wangll@sustech.edu.cn}.} %
}

\begin{document}

\maketitle
\thispagestyle{empty}
\pagestyle{empty}

\normalsize

\begin{abstract}
We propose a method for the collective maneuvering of affine formations in the plane by modifying the original weights of the Laplacian matrix used to achieve static formations of robot swarms. Specifically, the resulting collective motion is characterized as a time-varying affine transformation of a reference configuration, or \emph{shape}. Unlike the traditional leader-follower strategy, our leaderless scheme allows agents to maintain distinct and possibly time-varying velocities, enabling a broader range of collective motions, including all the linear combinations of translations, rotations, scaling and shearing of a reference shape. Our analysis provides the analytic solution governing the resulting collective motion, explicitly designing the eigenvectors and eigenvalues that define this motion as a function of the modified weights in the new Laplacian matrix. To facilitate a more tractable analysis and design of affine formations in 2D, we propose the use of complex numbers to represent all relevant information. Simulations with up to 20 agents validate the theoretical results.
\end{abstract}

\begin{IEEEkeywords}
    Distributed formation control, Multi-agent systems, Complex Laplacian, Autonomous systems.
\end{IEEEkeywords}


\section{Introduction} 
\label{sec: intro}

World-class roboticists and industry experts firmly indicate that swarm technology, such as distributed controllers, is key to achieve reconfigurable robotic tasks requiring high resilience and limitless scalability, particularly in vast, unstructured, and dynamic environments \cite{challenges}. These tasks include specific missions in search \& rescue \cite{SandR}, exploration \cite{burgard2005coordinated}, or environmental monitoring \cite{dunbabin2012robots}. However, the full potential of robot swarms will be realized only if we can engineer predictable \textit{collective behavior} that emerges primarily from \textit{local robot interactions}, ideally without requiring explicit inter-robot communication.

Among the most studied forms of collective behavior is the ability of multi-agent teams to form and maintain precise geometric patterns. The literature offers various formation control strategies, with those grounded in consensus theory attracting significant attention due to their scalability, robustness, and low communication requirements \cite{surveyCons, lin2016complexlap, lin2016necessary}. Of particular relevance is the approach in \cite{lin2016necessary}, which leverages a \textit{generalized graph Laplacian} to ensure asymptotic convergence to formations invariant under \textit{affine transformations}. This invariance enables broad collective behaviors while preserving formation geometry. While \cite{zhao2018affine} extended this to leader-follower architectures and subsequent works addressed directed graphs \cite{yang2020}, higher-order dynamics \cite{affineTripleInt2019, affineHighOrder2020}, and disturbance robustness \cite{affineLayered2021}, relying on designated leaders introduces structural limitations and reduced robustness against key agents' failure.

In practical scenarios with inconsistent measurements, existing formation control algorithms exhibit robustness issues, including undesired stationary motions and distorted formations \cite{Sun2018, ug_problems, inconsistent_mes}. Rather than treating these imperfections as disturbances, we propose understanding their effects to design \textit{emergent collective behavior} that maneuvers multi-agent teams \cite{chen2021maneuvering, de2016distributed}. Our leaderless, distributed formation maneuvering approach deliberately injects \textit{structured imperfections}, such as artificial scaling or biased displacement measurements, into the Laplacian-based controller from \cite{lin2016necessary}. These imperfections become \textit{motion parameters} embedded in the original Laplacian weights, requiring no additional control layer while preserving global stability guarantees—an advantage over leader-follower approaches \cite{zhao2018affine, yang2020, Fang2024}.

Initially explored in \cite{hgdemarina_complex} for complex formations and extended in our conference version \cite{hgdemarina_affine} with real-valued biases, this journal version introduces three novel contributions: (1) we present the key idea of using complex number for 2D affine formation control, enhancing techniques from \cite{hgdemarina_complex,lin2016complexlap} to analyze \emph{affine motions}; (2) we show how complex representation enables basis-selection strategy, characterizing any desired collective motion as linear combinations of fundamental affine modes, significantly simplifying weight computation; (3) we derive complete explicit analytical solutions to closed-loop dynamics, typically no available in leader-follower frameworks.

All results in this work consider \textit{single-integrator dynamics}, offering analytical tractability while capturing core formation behavior \cite{formationcontrol}. Importantly, this choice does not restrict applicability—single-integrator models serve as high-level controllers in established practical implementations such as guiding vector fields \cite{gvf_classic, gvf_parametric, PF_survey}, where convergence and stability guarantees are preserved for complex robotic platforms.

The rest of this paper is organized as follows. Section \ref{sec: pre} introduces necessary notation and graph theory concepts. Section \ref{sec: R2toC} explains planar affine formation codification in $\mathbb{C}$ and designs real Laplacian-based formation control. Section \ref{sec: Lmod} introduces the modified Laplacian matrix, analyzes stability, and provides analytical solutions. Section \ref{sec: sims} presents numerical simulations, and Section \ref{sec: conclusions} concludes with future work.


\section{Preliminaries}
\label{sec: pre}

\subsection{Notation and graph theory}

We consider $n\in\mathbb{N}$ mobile agents with $\iota$ denoting the complex unit. We use $|x|$ for the Euclidean norm of $x\in\mathbb{C}^n$, $|\mathcal{X}|$ for set cardinality, $\mathbf{1}_n \in \mathbb{C}^n$ for the $n$-dimensional vector of ones, and define $\overline A := A \otimes I_m \in \mathbb{R}^{pm\times mq}$ for matrix $A \in \mathbb{R}^{p\times q}$ using Kronecker product $\otimes$ and identity matrix $I_m$.

Given an affine space $\mathbb{A} := (\mathcal{A},V,F)$, where $\mathcal{A}$ is the points set and $V$ is a vector space over the field $F$, a \textit{reference frame} of $\mathbb{A}$ is defined as $\mathcal{O}_r := (p_r, \mathcal{B}_r)$, where $p_r\in \mathcal{A}$ is the origin, and $\mathcal{B}_r$ is a \textit{basis} of $V$. Throughout this paper, we will consider $\mathbb{C}^n$ as both the points set $\mathcal{A}$ and the vector space $V$ over the field $F = \mathbb{C}$.
The coordinate change from frame $\mathcal{O}_r$ to $\mathcal{O}_{r^\prime}$ is the affine map $T: \mathbb{C}^n \rightarrow \mathbb{C}^n$ defined by $x_r = T(x_r^\prime) = \mathcal{L}(x_r^\prime) + (p_r - p_r^\prime)$, where $x_r$ and $x_r^\prime$ represent the vector $x$ observed from $\mathcal{O}_r$ and $\mathcal{O}_{r^\prime}$, respectively, and $\mathcal{L}: \mathbb{C}^n \rightarrow \mathbb{C}^n$ is the linear basis change map. We denote $[x_r]_{\mathcal{B}}$ as the coordinates of $x_r$ with respect to the basis $\mathcal{B}$.

A \textit{graph} $\mathcal{G} = (\mathcal{V},\mathcal{E})$ consists of the node set $\mathcal{V} = \{1,2,\dots,n\}$ and the edge set $\mathcal{E} \subseteq (\mathcal{V} \times \mathcal{V})$. We consider \textit{undirected} graphs where edge $(i,j)\in \mathcal{E}$ implies $(j,i)\in\mathcal{E}$. The neighbors of node $i$ are $\mathcal{N}_i := {j\in\mathcal{V} : (i,j)\in\mathcal{E}}$. Let $w_{ij} \in \mathbb{R}$ be a weight associated with the edge $(i,j)$, then the \textit{Laplacian} matrix $L \in \mathbb{R}^{n\times n}$ of $\mathcal{G}$ is defined as
\begin{equation}
    \scalemath{0.9}{
    l_{ij} :=
    \begin{cases}
        \begin{array}{ll}
            \sum_{k\in\mathcal{N}_i}w_{ik} & \text{if} \quad i = j\\
            -\omega_{ij} & \text{if} \quad i \neq j \, \wedge \, j \in \mathcal{N}_i\\
            0 & \text{if} \quad i \neq j \, \wedge \, j \notin \mathcal{N}_i,
        \end{array}
    \end{cases}
    }
\end{equation}
satisfying $L\mathbf{1}_n = 0$. For undirected graphs, we choose one of the two arbitrary directions for each pair of neighboring nodes to construct the ordered edge set $\mathcal{Z}$ with elements $\mathcal{Z}_k = (\mathcal{Z}_k^{\text{head}}, \mathcal{Z}_k^{\text{tail}})$, $k \in \{1,\dots,\frac{|\mathcal{E}|}{2}\}$. From such an ordered set, we construct the \textit{incidence matrix} $B \in \mathbb{R}^{|\mathcal{V}|\times|\mathcal{Z}|}$ satisfying $B^\top \mathbf{1}_n = 0$ as
\begin{equation} \label{eq: B}
    \scalemath{0.9}{
    b_{ik} :=
    \begin{cases}
        \begin{array}{ll}
            +1 & \text{if} \quad i = \mathcal{Z}_k^{\text{tail}}\\
            -1 & \text{if} \quad i = \mathcal{Z}_k^{\text{head}}\\
            0 & \text{otherwise}.
        \end{array}
    \end{cases}
    }
\end{equation}


\section{Affine formation control in $\mathbb{R}^2$ over the complex plane}
\label{sec: R2toC}

\subsection{Framework and desired shape}

We encode the 2D position of each agent $i$ as $p_i \in \mathbb{C} \simeq \mathbb{R}^2$, where the real component accounts for the $x$-axis coordinate and the imaginary one for the $y$-axis coordinate. We stack all the positions $p_i$ in a single vector $p \in \mathbb{C}^n$ given with respect to the \textit{Earth-fixed} reference frame $\mathcal{O}_g = (p_g, \mathcal{B}_g)$, and we call it \textit{configuration}. Indeed, we can decode $p$ to a stack of vectors in $\mathbb{R}^{2n}$ by considering the linear map $\mathcal{R}: \mathbb{C}^n \rightarrow \mathbb{R}^{2n}$ that we define as $\mathcal{R}(p) = \operatorname{Re}(p) \otimes [1\;0]^\top + \operatorname{Im}(p) \otimes [0\;1]^\top$.

We define the \textit{framework} $\mathcal{F}$ as $(\mathcal{G},p)$, where we assign each agent's position $p_i$ to the node $i \in \mathcal{V}$, and the graph $\mathcal{G}$ establishes the set of neighbors $\mathcal{N}_i$ for each agent $i$.

We choose a \textit{reference shape} $p^* \in \mathbb{C}^n$ satisfying the following conditions: (1) $p^*$ is \textit{non-degenerate}; (2) the framework $(\mathcal{G}, p^*)$ is \textit{generic}, i.e., the coordinates do not satisfy any non-trivial algebraic equation with rational coefficients \cite{rigidity}; (3) $p^*$ is defined with respect to a \textit{body-fixed} reference frame $\mathcal{O}_b = (p_b, \mathcal{B}_b)$, where $p_b$ is at the centroid of $p^*$, as shown in \autoref{fig: p_star}; and (4) certain rigidity conditions on the framework $(\mathcal{G}, p^*)$, such as global or universal rigidity \cite{rigidity}, will be required to simplify some results. Although our results apply to arbitrary configurations, we retain point (1) since degenerate shapes (e.g., collinear agents) cannot generate motions such as rotation or shear.

\begin{figure}
    \centering
    \includegraphics[trim={0cm 0cm 0cm 0cm}, clip, width=1\columnwidth]{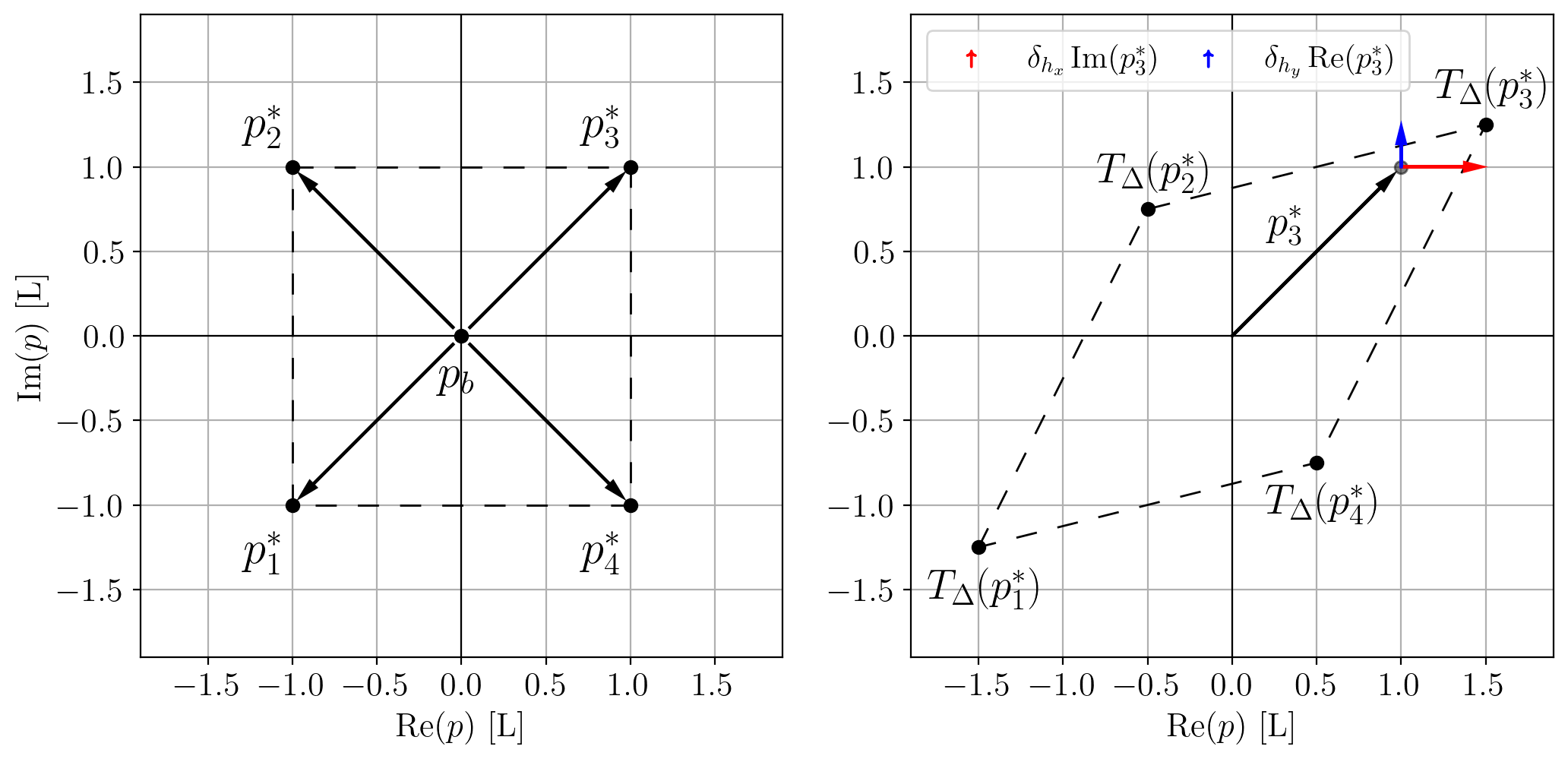}
    \caption{The left plot shows a square reference shape $p^*$ for a team of $n=4$ robots. In the right plot, this reference shape undergoes an affine transformation given by the linear transformation $T_\Delta$, with $\Delta = [\delta_x\; \delta_y\; \delta_{a_x}\; \delta_{a_y}\; \delta_{h_x}\; \delta_{h_y}] = [0 \; 0 \; 1 \; 1\; 0.5 \; 0.25]$, so that $T_\Delta(p^*) = (1 + \iota 0.25) \operatorname{Re}(p^*) + (0.5 + \iota) \operatorname{Im}(p^*)$.}
     \label{fig: p_star}
\end{figure}

An affine transformation of the reference shape $p^*$ in the plane, represented in $\mathbb{R}^2$, is given by $(I_n \otimes A) \mathcal{R}(p^*) + (\mathbf{1}_n \otimes b)$, where $A \in \mathbb{R}^{2\times 2}$ and $b \in \mathbb{R}^2$. Encoding the same affine transformation over $\mathbb{C}^n$ yields
\begin{equation} \label{eq: p_hat}
    T_\Delta(p^*) = c_1 \mathbf{1}_n + 
    c_2 \operatorname{Re}(p^*) +
    c_3 \operatorname{Im}(p^*),
\end{equation}
where $c_{\{1,2,3\}} \in \mathbb{C}$. Here, $T_\Delta: \mathbb{C}^n \rightarrow \mathbb{C}^n$ denotes the \emph{affine map} parameterized by the vector of \textit{affine coordinates} $\Delta = [\delta_x\;\; \delta_y\;\; \delta_{a_x}\;\; \delta_{a_y}\;\; \delta_{h_x}\;\; \delta_{h_y}]^\top \in \mathbb{R}^6$. The complex coefficients are defined as 
\begin{align} \label{eq: cs}
    c_1 = \delta_x + \iota \delta_y, \quad c_2 = \delta_{a_x} + \iota \delta_{h_y}, \quad c_3 = \delta_{h_x} + \iota \delta_{a_y},
\end{align}
so that component of $\Delta$ has a geometric interpretation: $\delta_x$ and $\delta_y$ account for horizontal and vertical translation, $\delta_{a_x}$ and $\delta_{a_y}$ control scaling parallel to the $x$-axis and $y$-axis of $\mathcal{O}_b$, respectively, and $\delta_{h_x}$ and $\delta_{h_y}$ produce shearing along these same axes. This codification is illustrated in \autoref{fig: p_star}.

Indeed, $T_\Delta(p^*)$ is the reference shape $p^*$ observed from a new reference frame $\mathcal{O}_\Delta = (p_\Delta, \mathcal{B}_\Delta)$, so the affine transformation $T_\Delta$ provides the change of coordinates from $\mathcal{O}_b$ to $\mathcal{O}_\Delta$, with $c_1 = p_\Delta - p_b$, and $c_{\{2,3\}}$ representing the change of basis. Therefore, we define the concept of \textit{desired shape} constructed from the reference shape $p^*$ as follows.
\begin{defn} [Desired shape]
    The \textit{desired shape set}, constructed from the reference shape $p^*$, is defined as
    \begin{align} \label{eq: S}
         \mathcal{S} := \{p = T_\Delta(p^*) \; | \; \Delta \in \mathbb{R}^6\}.
    \end{align}
    We say the formation is at the \textit{desired shape} when $p \in \mathcal{S}$.
\end{defn}

It is important to note that $\mathcal{S}$ does not contain all possible affine transformations of a given vector within $\mathbb{C}^n$, but rather all planar affine transformations of the reference shape $p^*$. Furthermore, $\mathcal{S}$ is a vector subspace of $\mathbb{C}^n$, spanned by $\mathbf{1}_n$, $\operatorname{Re}(p^*)$ and $\operatorname{Im}(p^*)$, which are ensured to be linearly independent by design. Specifically, we define the basis for the desired shape subspace as 
$\mathcal{B}_\mathcal{S} = \{\mathbf{1}_n, \; \operatorname{Re}(p^*), \; \operatorname{Im}(p^*)\},$
so that $[T_\Delta(p^*)]_{\mathcal{B}_\mathcal{S}} = [c_1\, c_2\, c_3]$. Finally, let us introduce the following straightforward result, since it will be convenient throughout this paper.
\begin{lemma} \label{lem: addition}
	Given a set $\mathcal{T}(x^*) = \{x = T_\Delta(x^*) \; | \; \Delta \in \mathbb{R}^6\}$, with $x^* \in \mathbb{C}^n$ and $T_\Delta$ as in \eqref{eq: p_hat}, any affine transformation in the plane of an element in $\mathcal{T}$ yields another element in $\mathcal{T}$.
\end{lemma}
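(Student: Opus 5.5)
The plan is to work directly with the complex parametrization \eqref{eq: p_hat}. Every element of $\mathcal{T}(x^*)$ has the form $x = c_1\mathbf{1}_n + c_2\operatorname{Re}(x^*) + c_3\operatorname{Im}(x^*)$ with $c_1,c_2,c_3\in\mathbb{C}$. The map $\Delta\mapsto(c_1,c_2,c_3)$ in \eqref{eq: cs} is a bijection $\mathbb{R}^6\to\mathbb{C}^3$, so $\mathcal{T}(x^*)$ is exactly the set of all such combinations with arbitrary complex coefficients. An arbitrary planar affine transformation of a configuration $x\in\mathbb{C}^n$, applied agentwise, is also of the form \eqref{eq: p_hat}, namely $T_{\Delta'}(x) = d_1\mathbf{1}_n + d_2\operatorname{Re}(x) + d_3\operatorname{Im}(x)$ with $d_1,d_2,d_3\in\mathbb{C}$. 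So it suffices to show that $T_{\Delta'}(T_\Delta(x^*))$ can again be written as $e_1\mathbf{1}_n + e_2\operatorname{Re}(x^*) + e_3\operatorname{Im}(x^*)$ for some $e_i\in\mathbb{C}$.

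First I will compute the real and imaginary parts of $x=T_\Delta(x^*)$. Since $\mathbf{1}_n$, $\operatorname{Re}(x^*)$ and $\operatorname{Im}(x^*)$ are real vectors,
\[
\operatorname{Re}(x) = \operatorname{Re}(c_1)\mathbf{1}_n + \operatorname{Re}(c_2)\operatorname{Re}(x^*) + \operatorname{Re}(c_3)\operatorname{Im}(x^*),
\]
and the analogous identity holds for $\operatorname{Im}(x)$ with $\operatorname{Im}(c_i)$ in place of $\operatorname{Re}(c_i)$.

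Substituting these into $T_{\Delta'}(x)$ and collecting terms gives
\[
e_1 = d_1 + d_2\operatorname{Re}(c_1) + d_3\operatorname{Im}(c_1),
\]
together with $e_2 = d_2\operatorname{Re}(c_2) + d_3\operatorname{Im}(c_2)$ and $e_3 = d_2\operatorname{Re}(c_3)+d_3\operatorname{Im}(c_3)$. These are complex numbers, so by the bijectivity of \eqref{eq: cs} there is a $\Delta''\in\mathbb{R}^6$ with $T_{\Delta'}(T_\Delta(x^*)) = T_{\Delta''}(x^*)\in\mathcal{T}(x^*)$.

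The only point requiring care is the justification that "any affine transformation in the plane" acting on a configuration is exactly a map of the form \eqref{eq: p_hat} applied to that configuration. This follows from the real form $(I_n\otimes A)\mathcal{R}(x) + \mathbf{1}_n\otimes b$ written out in complex coordinates, which is the same encoding the paper uses to obtain \eqref{eq: p_hat} and \eqref{eq: cs}. Once that identification is made, the remainder is the linear bookkeeping above. Notably, no non-degeneracy of $x^*$ is needed, since we only exhibit coefficients and never invoke uniqueness.
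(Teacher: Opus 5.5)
Your proposal is correct and takes the same route as the paper: compose $T_{\Delta'}$ with $T_\Delta$ and observe that the result is again a complex combination of $\mathbf{1}_n$, $\operatorname{Re}(x^*)$ and $\operatorname{Im}(x^*)$. Once expanded via \eqref{eq: cs}, your coefficients $e_1,e_2,e_3$ are exactly the bracketed coefficients in \eqref{eq: T_delta_T_delta}; your complex-coefficient bookkeeping, together with the explicit remark that $\Delta\mapsto(c_1,c_2,c_3)$ is a bijection, just writes compactly what the paper spells out in the six real parameters.
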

\begin{proof}
	For the affine transformation, \eqref{eq: p_hat} and \eqref{eq: cs} lead us to
    \begin{align} \label{eq: T_delta_T_delta}
        \scalemath{0.76}{
        \begin{array}{rll}
            & T_{\Delta^\prime}(T_\Delta(p^*)) = \left[ (\delta_x^\prime + \delta_{a_x}^\prime \delta_x + \delta_y \delta_{h_x}^\prime) + \iota (\delta_y^\prime + \delta_{a_y}^\prime \delta_y + \delta_x \delta_{h_y}^\prime) \right] \mathbf{1}_n\\
            & \qquad + \left[ (\delta_{a_x}^\prime \delta_{a_x} + \delta_{h_x}^\prime \delta_{h_y}) + \iota (\delta_{h_y}^\prime \delta_{a_x} + \delta_{a_y}^\prime \delta_{h_y}) \right] \operatorname{Re}(p^*)\\
            & \qquad + \left[ (\delta_{a_x}^\prime \delta_{h_x} + \delta_{h_x}^\prime \delta_{a_y}) + \iota (\delta_{h_y}^\prime\delta_{h_x} + \delta_{a_y}^\prime \delta_{a_y}) \right] \operatorname{Im}(p^*),\\
        \end{array}}
    \end{align}
    which is a vector within $\mathcal{T}$, i.e., $T_{\Delta^\prime}(T_\Delta(p^*)) \in \mathcal{T}$.
\end{proof}



\subsection{Desired collective motion}

Considering a framework $\mathcal{F}=(\mathcal{G},p)$, we denote the time variation of every position $p_i$ as $v_i \in \mathbb{C}$. We stack all these velocity vectors $v_i$ in $v \in \mathbb{C}^n$, given with respect to $\mathcal{O}_g$, which represents the time variation of the configuration, and we call it \textit{collective motion}. Hence, as we do for the position, we consider a \textit{reference collective motion} $v^* \in \mathbb{C}^n$ for the team of agents, given with respect to $\mathcal{O}_b$. Although this $v^*$ can be arbitrary, in this paper it will be interesting to design it as $v^* = k_u T_\Delta(p^*)$. Let us note that, in contrast with other works \cite{zhao2018affine, Fang2024} where all robots share a common reference velocity, e.g., in \emph{leader-follower} schemes, here all the reference velocities $v_i^*$ in $v^*$ are different in general, enabling richer collective motions without the need to track any leader signal. Besides, we denote the reference collective motion $v^*$ observed from $\mathcal{O}_\Delta$ as $T_\Delta(v^*)$. In this case, the term $c_1 = \delta_x + \iota \delta_y$, which in $T_\Delta(p^*)$ represents a spatial translation, here represents the relative velocity between $\mathcal{O}_b$ and $\mathcal{O}_\Delta$, e.g., $c_1 = 0$ if both reference frames are moving at the same velocity. Finally, note that having $p = T_\Delta(p^*)$ not necessarily implies that $v = T_\Delta(v^*)$ with the same $\Delta$. Next, we will define the concept of \textit{desired collective motion} constructed from the reference collective motion $v^*$, and we will show how to design it so that the set $\mathcal{S}$ (\ref{eq: S}) remains invariant. 
\begin{defn} [Desired collective motion]
    The \textit{collective motion  set}, constructed from the reference collective motion $v^*$, is defined as
    \begin{align} \label{eq: M}
         \mathcal{M} := \{v = T_\Delta(v^*) \; | \; \Delta \in \mathbb{R}^6\}.
    \end{align}
    We say the formation exhibits the \textit{desired collective motion} when $v \in \mathcal{M}$.
\end{defn}

\begin{lemma} \label{lem: invariance}
	Consider $p(t_0) \in \mathcal{S}, t_0\in\mathbb{R}^+$. If $v(t) \in \mathcal{M}$ with $v^* = k_u T_{\Delta}(p^*)$, then $\mathcal{S}$ is invariant for all $t \geq t_0$.
\end{lemma}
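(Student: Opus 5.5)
The plan is to show that $\mathcal{M}\subseteq\mathcal{S}$ and then use that $\mathcal{S}$ is a linear subspace of $\mathbb{C}^n$, so integrating a velocity that always lies in it keeps the configuration in it.

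First, with $v^* = k_u T_{\Delta}(p^*)$ and $T_\Delta(p^*)\in\mathcal{S}$, take any $v\in\mathcal{M}$, so $v = T_{\Delta'}(v^*) = T_{\Delta'}(k_u T_\Delta(p^*))$ for some $\Delta'\in\mathbb{R}^6$. Assuming $k_u$ is real, $k_u T_\Delta(p^*)$ is again of the form \eqref{eq: p_hat}, with coefficients $k_u c_1, k_u c_2, k_u c_3$, so it lies in $\mathcal{S}$. Lemma \ref{lem: addition}, applied with $x^*=p^*$, then gives $T_{\Delta'}(k_u T_\Delta(p^*))\in\mathcal{S}$, which is also visible directly from \eqref{eq: T_delta_T_delta}. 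Hence $v(t)\in\mathcal{S}$ for every $t\ge t_0$. The point to verify is that the map \eqref{eq: p_hat} acts on a general vector $x$ through $\operatorname{Re}(x)$ and $\operatorname{Im}(x)$, so real-linear combinations of $\mathbf{1}_n,\operatorname{Re}(p^*),\operatorname{Im}(p^*)$ are sent back into their complex span. This uses $\operatorname{Re}(\mathbf{1}_n)=\mathbf{1}_n$ and $\operatorname{Im}(\mathbf{1}_n)=0$. If $k_u$ were complex, I would still argue closure using that $\mathcal{S}$ is a complex subspace, as the paper states.

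Second, write $p(t)=p(t_0)+\int_{t_0}^t v(\tau)\,d\tau$. Since $\mathcal{S}=\operatorname{span}_{\mathbb{C}}\mathcal{B}_\mathcal{S}$ is a finite-dimensional, hence closed, subspace, the integral of an $\mathcal{S}$-valued integrable function lies in $\mathcal{S}$. Concretely, decompose $v(\tau)=\alpha_1(\tau)\mathbf{1}_n+\alpha_2(\tau)\operatorname{Re}(p^*)+\alpha_3(\tau)\operatorname{Im}(p^*)$ and integrate the coefficients. Adding $p(t_0)\in\mathcal{S}$ gives $p(t)\in\mathcal{S}$ for all $t\ge t_0$.

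Equivalently, one can phrase the second step with the coordinates $[p]_{\mathcal{B}_\mathcal{S}}$: write $p(t)=T_{\Delta(t)}(p^*)$, so that $\dot c_i=\alpha_i$. This shows the motion is a time-varying affine transformation of $p^*$, matching the interpretation in the paper.

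The main obstacle is the first step. One must check carefully that $T_{\Delta'}$ applied to an element of $\mathcal{S}$ stays in $\mathcal{S}$, which is really about how $T$ is defined on arbitrary vectors. If needed, I would expand $\operatorname{Re}$ and $\operatorname{Im}$ of $c_1\mathbf{1}_n+c_2\operatorname{Re}(p^*)+c_3\operatorname{Im}(p^*)$ explicitly, as in \eqref{eq: T_delta_T_delta}. Mild regularity of $v(t)$, such as integrability, is assumed implicitly.
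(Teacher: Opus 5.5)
Your proposal is correct and follows essentially the paper's route: use Lemma~\ref{lem: addition} to show that every $v\in\mathcal{M}$ (with $v^*=k_uT_\Delta(p^*)$) lies in $\mathcal{S}$, then write $p(t)=p(t_0)+\int_{t_0}^t v(\tau)\,d\tau$ and use that $\mathcal{S}$ is a closed subspace. Your version is slightly cleaner in two respects. You absorb $k_u$ into the coefficients before invoking Lemma~\ref{lem: addition}, whereas the paper's step $T_{\Delta_2}(k_uT_{\Delta_1}(p^*))=k_uT_{\Delta_2}(T_{\Delta_1}(p^*))$ is off by a multiple of $\mathbf{1}_n$ because $T_{\Delta_2}$ is affine; this is harmless, since that term lies in $\mathcal{S}$. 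You also avoid the paper's unnecessary conditioning on $p(t)\in\mathcal{S}$.
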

\begin{proof}
	Firstly, we note that if two configurations $p_1$ and $p_2$ belong to $\mathcal{S}$ then it is straightforward to see that $(p_1 + p_2) \in \mathcal{S}$ by looking at (\ref{eq: p_hat}); in fact, we have that $T_{\Delta_1}(p^*) + T_{\Delta_2}(p^*) = T_{(\Delta_1 + \Delta_2)}(p^*)$. Secondly, when $p(t) \in \mathcal{S}$, we have that $v(t) = T_{\Delta_2(t)}(v^*) = k_u T_{\Delta_2(t)}(T_{\Delta_1}(p^*)) = k_u T_{\Delta_3(t)}(p^*)$ due to Lemma \ref{lem: addition}; thus, $p(t) = p(t_0) + \int_{t_0}^t k_u T_{\Delta_3(t)}(p^*) \mathrm{dt}$. Hence, since both $p(t_0)$ and $k_u T_{\Delta_3(t)}(p^*) \mathrm{dt}$ are within $\mathcal{S}$, and we take the integral in the sense of Riemann, then $p(t) \in \mathcal{S}$ for all $t \geq t_0$, i.e., $\mathcal{S}$ is invariant.
\end{proof}

\begin{remark}
The sets $\mathcal{M}$ in (\ref{eq: M}) and $\mathcal{S}$ in (\ref{eq: S}) might look the same; in fact, they are isomorphic. However, we should not forget that $\mathcal{M}$ and $\mathcal{S}$ contain velocities and positions, respectively. Nonetheless, for the sake of conciseness, we will omit the unit conversion $k_u$ for the rest of the paper.
\end{remark}

\begin{remark}
Note that a degenerated $p^*$ implies that $v^*$ is also \emph{degenerated}, i.e., a degenerated $p^*$ can restrict the family of possible motions $v^*$.
\end{remark}


\subsection{Stabilization of an affine static shape} \label{sec: static_shape}

We model the agents as point-mass particles, so that we can command their velocities as
\begin{equation} \label{eq: dyni}
    \dot p_i = u_i, \quad i\in\mathcal{V},
\end{equation}
where $u_i\in\mathbb{C}$ is the control input to the agent $i$. Stacking the agents' positions and inputs yields the compact form
\begin{equation} \label{eq: dyn}
    \dot p = u,
\end{equation}
where $u \in \mathbb{C}^n$ is the stacked vector of control actions. 

Since we aim for a distributed implementation, each $u_i$ must depend solely on local information; specifically, the relative positions $z_{ij} := (p_i - p_j), \, j \in \mathcal{N}_i$. In particular, the original algorithm that steers $p(t) \rightarrow \mathcal{S}$ with a static eventual configuration takes the form \cite{lin2016necessary}
\begin{equation} \label{eq: u_control}
    u_i = -\sum_{j \in \mathcal{N}_i} w_{ij} (p_i - p_j) = - \sum_{j \in \mathcal{N}_i} w_{ij} z_{ij},
\end{equation}
where $w_{ij} =  w_{ji} \in \mathbb{R}$ are weights whose design, along with the graph $\mathcal{G}$, will be explained shortly to ensure that the Laplacian matrix $L$ is positive semi-definite \cite{lin2016necessary, zhao2018affine}. The compact form of \eqref{eq: u_control} and \eqref{eq: dyni} in $\mathbb{R}^2$ is $\mathcal{R}(\dot p) = - \overline L \mathcal{R}(p)$, so $\mathcal{R}(p(t)) \rightarrow \operatorname{Ker}\{\overline L\}$ as $t \rightarrow \infty$. However, this expression can be simplified by encoding it in $\mathbb{C}$ as
\begin{equation} \label{eq: L_control}
    \dot p = -Lp,
\end{equation}
so that $p(t) \rightarrow \operatorname{Ker}\{L\}$. In the following technical result, we will show that the kernel of $L$ is the set $\mathcal{S}$ when the framework $\mathcal{F}$ is \textit{globally rigid} \cite{lin2016necessary} and we force the weights, besides the trivial solution, to satisfy the following balance 
\begin{equation} \label{eq: w_condition}
     \sum_{j\in\mathcal{N}_i} w_{ij}(p_i^* - p_j^*) = 0, \quad \forall i \in \mathcal{V}.
\end{equation}

\begin{lemma} \label{lem: L_eigen}
    Consider a reference shape $p^*$ so that $\mathbf{1}_n,\operatorname{Re}(p^*),\operatorname{Im}(p^*)\in\mathbb{R}^n$ are linearly independent, a globally rigid framework $\mathcal{F}$, and a Laplacian matrix $L$ whose weights are designed according to \eqref{eq: w_condition}, then
    \begin{align} \label{eq: ker_L}
    \operatorname{Ker}\{L\} = \{p = c_1 \mathbf{1}_n + c_2\operatorname{Re}(p^*) &+ c_3\operatorname{Im}(p^*) \; | \; \nonumber \\ & \quad c_1,c_2,c_3 \in \mathbb{C}\}.
    \end{align} 
\end{lemma}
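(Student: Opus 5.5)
The plan is to prove the two inclusions separately. The inclusion ``$\supseteq$'' is direct computation; the inclusion ``$\subseteq$'' will be a dimension count whose key input is a rank property of $L$ taken from the affine formation literature.

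\emph{Step 1 (the span lies in the kernel).} Since $L$ is a Laplacian, $L\mathbf{1}_n = 0$. Condition \eqref{eq: w_condition} says exactly that the $i$-th entry of $Lp^*$ vanishes for every $i$, so $Lp^* = 0$. Because the weights $w_{ij}$ are real, $L$ is a real matrix. Taking real and imaginary parts of $Lp^* = 0$ therefore gives $L\operatorname{Re}(p^*) = 0$ and $L\operatorname{Im}(p^*) = 0$. By complex linearity of $L$, every vector $c_1\mathbf{1}_n + c_2\operatorname{Re}(p^*) + c_3\operatorname{Im}(p^*)$ with $c_1,c_2,c_3\in\mathbb{C}$ lies in $\operatorname{Ker}\{L\}$.

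\emph{Step 2 (dimension of the span).} The vectors $\mathbf{1}_n,\operatorname{Re}(p^*),\operatorname{Im}(p^*)$ are real and, by hypothesis, linearly independent over $\mathbb{R}$. Linear independence of real vectors over $\mathbb{R}$ implies independence over $\mathbb{C}$: if $\sum c_k x_k = 0$, separating real and imaginary parts gives two real relations, each of which must be trivial. Hence the right-hand side of \eqref{eq: ker_L} is a complex subspace of dimension $3$.

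\emph{Step 3 (the kernel has dimension exactly $3$).} This is the main obstacle. Here I would invoke the result of \cite{lin2016necessary} for a generic, globally rigid framework $(\mathcal{G},p^*)$ in the plane. It states that there exists a weight choice (a stress satisfying \eqref{eq: w_condition}) with $\operatorname{rank}(L) = n-3$, and that $L$ can moreover be taken positive semidefinite; this is the Connelly/Gortler--Healy--Thurston characterization of global rigidity via a stress matrix of maximal rank.

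I would make explicit that ``weights designed according to \eqref{eq: w_condition}'' is understood to include this maximal-rank choice. Without it, for instance with all $w_{ij}=0$, the kernel would be all of $\mathbb{C}^n$. Since $L$ is real, its rank over $\mathbb{C}$ equals its rank over $\mathbb{R}$. Hence $\dim_{\mathbb{C}}\operatorname{Ker}\{L\} = 3$.

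\emph{Step 4 (conclusion).} Combining Steps 1--3, the span is a $3$-dimensional subspace contained in a $3$-dimensional kernel, so the two coincide and \eqref{eq: ker_L} holds. The real-coefficient subset of this kernel is, via $\mathcal{R}$, exactly $\operatorname{Ker}\{\overline L\}$ restricted to realizable configurations. This matches the set $\mathcal{S}$ in \eqref{eq: S} and justifies the convergence $p(t)\to\mathcal{S}$ for \eqref{eq: L_control}.
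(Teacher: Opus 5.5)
Your proof is correct and follows the paper's argument: $L\mathbf{1}_n=0$, the real/imaginary split of \eqref{eq: w_condition} using that $L$ is real, the dimension-3 kernel from \cite[Theorem 2.1]{lin2016necessary}, and linear independence; you also make explicit two points the paper leaves implicit, namely that real independence implies complex independence and that \eqref{eq: w_condition} alone does not force $\operatorname{rank} L = n-3$ (a nonzero lower-rank stress can also satisfy it), so a maximal-rank stress must be chosen. Two side remarks are off, though neither is needed for the lemma: under mere global rigidity the maximal-rank stress need not be positive semidefinite (that requires universal rigidity, which is why the paper introduces $K$), and $\mathcal{S}$ is the full complex span rather than a real-coefficient subset, since each $c_k$ ranges over $\mathbb{C}$ as $\Delta$ ranges over $\mathbb{R}^6$, and it is this whole span that $\mathcal{R}$ maps onto $\operatorname{Ker}\{\overline L\}$.
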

\begin{proof}
    Firstly, note that $L\mathbf{1}_n = 0$ by the definition of Laplacian matrix. 
    Additionally, since $L$ is designed according to \eqref{eq: w_condition}, we have that
    \begin{align*}
        &\scalemath{0.9}{
        \sum_{j\in\mathcal{N}_i} w_{ij} (p_i^* - p_j^*) = \sum_{j\in\mathcal{N}_i} w_{ij}\left[\operatorname{Re}(p_i^* - p_j^*) + \iota\operatorname{Im}(p_i^* - p_j^*)\right] =} \\
        &\scalemath{0.9}{
        \sum_{j\in\mathcal{N}_i} w_{ij}\operatorname{Re}(p_i^* - p_j^*) + \iota\sum_{j\in\mathcal{N}_i} w_{ij}\operatorname{Im}(p_i^* - p_j^*) = 0, \quad \forall i \in \mathcal{V}},
   \end{align*}
   which in compact form gives $L\operatorname{Re}(p^*) + \iota L\operatorname{Im}(p^*) = 0$. By the fundamental properties of complex numbers, this implies $L \operatorname{Re}(p^*) = 0$ and $L \operatorname{Im}(p^*) = 0$. Furthermore, since $\mathcal{F}$ is globally rigid, we have that the kernel of $L$ has dimension $3$ \cite[Theorem 2.1]{lin2016necessary}. Consequently, since $\operatorname{Re}(p^*)$, $\operatorname{Im}(p^*)$, and $\mathbf{1}_n$ are linearly independent, they span the kernel of $L$.
\end{proof}

\begin{remark} \label{remark}
	Let $\lambda \in \mathbb{C}$ be an arbitrary eigenvalue of $L$, and $x\in\mathbb{R}^n$ its associated eigenvector. Then, it can be proved that $\lambda$ is also an eigenvalue of $\overline L$ with corresponding eigenvectors $x \otimes [1 \; 0]^\top$ and $x \otimes [0 \; 1]^\top$ \cite[Theorem 4.2.12]{Horn_Johnson_1991}. Therefore, the result in Lemma \ref{lem: L_eigen} can be directly applied to $L$ to be used in $\mathbb{R}^2$ (or higher), i.e., 
    \begin{align*}
        \operatorname{Ker}\{\overline L\} = \{p = &\mathbf{1}_{n} \otimes [r_1\;r_2]^\top + \operatorname{Re}(p^*) \otimes [r_3\;r_4]^\top \\ &+ \operatorname{Im}(p^*) \otimes [r_5\;r_6]^\top \; | \; r_1,\dots,r_6 \in \mathbb{R}\}.
    \end{align*} 
\end{remark}

According to Lemma \ref{lem: L_eigen}, the Laplacian matrix, with its weights satisfying all the constraints in \eqref{eq: w_condition}, has three zero eigenvalues, with eigenvectors $\mathbf{1}_n$, $\operatorname{Re}(p^*)$ and $\operatorname{Im}(p^*)$. 
Therefore, considering \eqref{eq: p_hat} and \eqref{eq: S}, we have that $\operatorname{Ker}\{L\} = \mathcal{S}$. Consequently, if the set of weights satisfying the constraints in \eqref{eq: w_condition} are designed so that the rest of the eigenvalues of $L$ are all positive, then $p(t)\rightarrow\mathcal{S}$ as $t \rightarrow \infty$ in (\ref{eq: L_control}).

When the framework $(\mathcal{G}, p^*)$ is \emph{generically and universally rigid} \footnote{Given a framework $(\mathcal{G}, p^*)$ with $p^*$ being generic, we say it is generically and universally rigid if, for any other configuration $(\mathcal{G}, q)$ with $q\in\mathbb{C}^s, \,s\in\mathbb{N}$, satisfying $||p_i^* - p_j^*|| = ||q_i - q_j||$, for all $(i,j) \in\mathcal{E}$, it also follows that $||p_i^* - p_j^*|| = ||q_i - q_j||$, for all $i,j \in\mathcal{V}$.} it is always possible to construct a positive semi-definite Laplacian matrix $L$ satisfying \eqref{eq: w_condition}. This rigidity condition requires the number of agents $n$ satisfies $n \geq m + 2$, where $m\in\mathbb{N}$ denotes the spatial dimension. We refer to \cite{kelly2014class} for techniques to construct such frameworks in 2D, and to \cite{zhao2018affine} for methods to compute the corresponding weights. If the framework is instead only \emph{globally rigid} (a weaker condition) \cite{anderson2008rigid}, it is still possible to choose the weights to satisfy \eqref{eq: w_condition} while ensuring that $L$ remains symmetric (real eigenvalues). However, to guarantee that the nonzero eigenvalues of $L$ are all positive, it becomes necessary to introduce gains $k_i\in\mathbb{R}\setminus\{0\}$ for each agent $i$ that modifies \eqref{eq: u_control} as
\begin{equation*}
    u_i = -k_i\sum_{j\in\mathcal{N}_i} w_{ij} z_{ij},
\end{equation*}
or, equivalently, considering \eqref{eq: dyn}, in compact form as
\begin{equation} \label{eq: L_control_K}
    \dot p = -KLp,
\end{equation}
where $K := \operatorname{diag}\{k_1,\dots,k_n\}$. An invertible $K$ always exists such that all nonzero eigenvalues of $KL$ have strictly positive real parts \cite{lin2016necessary, realK}, ensuring exponential convergence to a static configuration within $\mathcal{S}$, though finding such $K$ can be computationally hard \cite{realK}.

Importantly, while $KL$ is generally non-symmetric with potentially complex eigenvalues, the stability of \eqref{eq: L_control_K} is guaranteed as long as all nonzero eigenvalues of $KL$ lie in the right-half complex plane. Furthermore, since $K$ is diagonal and nonzero, constraints \eqref{eq: w_condition} remain satisfied for $KL$, and the kernel is not enlarged, so $\operatorname{Ker}\{KL\} = \operatorname{Ker}\{L\} = \mathcal{S}$. Thus, convergence to the desired shape $\mathcal{S}$ is ensured.


\section{Modified Laplacian matrix and affine maneuvering} \label{sec: Lmod}

\subsection{Modified Laplacian}

In this section, we will show how to modify a (non-unique) subset of weights $w_{ij}$ in \eqref{eq: w_condition} such that, given an arbitrary initial configuration $p(0)$, the collective motion converges to a velocity within $\mathcal{M}$ as $t\to\infty$, while $p(t)$ also converges to the desired shape $\mathcal{S}$.

Let us consider the following modified weights
\begin{equation} \label{eq: wmod}
    \tilde w_{ij} = h w_{ij} - \kappa \mu_{ij}, \quad (i,j) \in \mathcal{E},
\end{equation}
where the \emph{motion parameters} $\mu_{ij}\in\mathbb{R}$ will be designed shortly in Subsection \ref{sec: mus} for the translation, rotation, scaling, and shearing of the formation, $h \in \mathbb{R}_+$ will regulate the impact of these motion parameters over the original weights, and $\kappa\in\mathbb{R}$ will adjust the speed of the collective motion. Since our maneuvering technique is distributed, then $\mu_{ij} = 0$ whenever $j\notin\mathcal{N}_i$. Moreover, unlike the symmetry condition $w_{ij} = w_{ji}$ required for static formation control \cite{zhao2018affine}, the motion parameters generally satisfy $\mu_{ij} \neq \mu_{ji}$.


\begin{remark}[Software-Hardware Equivalence Principle]
The \emph{software parameter} $\kappa\mu_{ij}$ in \eqref{eq: wmod} is equivalent to a constant scaling $s_{ij}\in\mathbb{R}$ applied directly to the relative measurement $z_{ij}$ in \eqref{eq: u_control}, a \emph{hardware parameter}, since $\tilde w_{ij} z_{ij} = hw_{ij}(s_{ij}z_{ij})$ with $s_{ij} = 1 - \kappa\mu_{ij}/(hw_{ij})$. As $\mu_{ij}=0$ recovers $s_{ij}=1$ (a perfect measurement), each motion parameter acts as an \emph{engineered imperfection} deliberately injected into the measurements required by \eqref{eq: u_control} \cite{de2016distributed, de2020maneuvering}.
\end{remark}

Similarly to the incidence matrix $B$ in \eqref{eq: B}, consider again the ordered set of edges $\mathcal{Z}$, and define the components of the matrix $M\in\mathbb{R}^{|\mathcal{V}|\times |\mathcal{Z}|}$ as
\begin{equation} 
	\label{eq: m}
    m_{ik} :=
    \begin{cases}
        \begin{array}{ll}
            \mu_{i \mathcal{Z}_k^{\text{head}}} & \text{if} \quad i = \mathcal{Z}_k^{\text{tail}}\\
            -\mu_{i \mathcal{Z}_k^{\text{tail}}} & \text{if} \quad i = \mathcal{Z}_k^{\text{head}}\\
            0 & \text{otherwise}.
        \end{array}
    \end{cases}
\end{equation}

This definition allows us to express the \emph{modified Laplacian matrix} from the modified weights in \eqref{eq: wmod} in compact form as
\begin{equation} \label{eq: L_mod}
    \tilde L = hL - \kappa K^{-1} M B^\top.
\end{equation}
By substituting this modified Laplacian into \eqref{eq: L_control_K}, the closed-loop system becomes
\begin{equation} \label{eq: L_mod_control}
    \dot p = -K\tilde L p = -hKLp + \kappa M B^\top p,
\end{equation}
where the first term $-hKLp$ preserves the original shape-stabilizing dynamics from \eqref{eq: L_control_K}, while the new term $\kappa M B^\top p$ introduces motion control that alters the system's convergence behavior. The resulting stability and convergence to the desired shape now depend on the design of the motion parameters $\mu_{ij}$ as well as the scalar gains $h$ and $\kappa$.





\subsection{Motion parameters and reference collective motion design} \label{sec: mus}

In the following technical result, we show how to design $M$ so that $\mathcal{S}$ remains invariant if $p(0) \in \mathcal{S}$ initially. Subsequently, in Subsection \ref{sec: stability}, we will show that this design together with a lower bound for $h$ in (\ref{eq: L_mod_control}) guarantees that $p(t) \rightarrow \mathcal{S}$ and $\dot p(t) \rightarrow \mathcal{M}$ as $t \rightarrow \infty$.

\begin{lemma} \label{lem: M}
	Let $p^*$ and $L$ be as in Lemma \ref{lem: L_eigen}, and $v^* = T_\Delta(p^*)$. Consider the dynamics \eqref{eq: L_mod_control} with $p(0) \in \mathcal{S}$; if the matrix $M$ is designed so that
    \begin{equation} \label{eq: MB_pstar_vstar}
        M B^\top p^* = v^*,
    \end{equation}
    then $\mathcal{S}$ is invariant $\forall t \geq 0$.
\end{lemma}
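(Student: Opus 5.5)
The plan is to show that the subspace $\mathcal{S}$ is mapped into itself by the linear closed-loop matrix $A := -hKL + \kappa M B^\top$ of \eqref{eq: L_mod_control}. Invariance of $\mathcal{S}$ then follows from standard linear-systems facts, and Lemma \ref{lem: invariance} gives the same conclusion from the viewpoint of velocities.

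\emph{Step 1: the stabilizing term vanishes on $\mathcal{S}$.} Take an arbitrary $p \in \mathcal{S}$ and write it as $p = c_1\mathbf{1}_n + c_2\operatorname{Re}(p^*) + c_3\operatorname{Im}(p^*)$ with $c_{1,2,3}\in\mathbb{C}$. By Lemma \ref{lem: L_eigen}, $\mathcal{S} = \operatorname{Ker}\{L\}$, so $-hKLp = 0$. Hence $\dot p = \kappa M B^\top p$. Moreover $B^\top\mathbf{1}_n = 0$, which leaves
\[
\dot p = \kappa\bigl(c_2\, M B^\top \operatorname{Re}(p^*) + c_3\, M B^\top \operatorname{Im}(p^*)\bigr).
\]

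\emph{Step 2: split the design condition into real and imaginary parts.} This is the only step requiring some care. Since $M$ and $B$ are real matrices, condition \eqref{eq: MB_pstar_vstar} splits into
\[
M B^\top \operatorname{Re}(p^*) = \operatorname{Re}(v^*), \qquad M B^\top \operatorname{Im}(p^*) = \operatorname{Im}(v^*).
\]
It then remains to check that $\operatorname{Re}(v^*)$ and $\operatorname{Im}(v^*)$ lie in $\mathcal{S}$.

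By \eqref{eq: p_hat}, $v^* = T_\Delta(p^*) = a_1\mathbf{1}_n + a_2\operatorname{Re}(p^*) + a_3\operatorname{Im}(p^*)$ with $a_k\in\mathbb{C}$. The three basis vectors $\mathbf{1}_n$, $\operatorname{Re}(p^*)$, $\operatorname{Im}(p^*)$ are real. Therefore $\operatorname{Re}(v^*) = \sum_k \operatorname{Re}(a_k)\, b_k$ and $\operatorname{Im}(v^*) = \sum_k \operatorname{Im}(a_k)\, b_k$, where $b_k$ denotes the basis vectors of $\mathcal{B}_\mathcal{S}$. Both are (real) combinations of these basis vectors, so both lie in $\mathcal{S}$. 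Here I use that $\mathcal{S}$ is the complex span of $\mathcal{B}_\mathcal{S}$, as stated after its definition.

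\emph{Step 3: conclude invariance.} From Steps 1 and 2, $\dot p = \kappa\bigl(c_2\operatorname{Re}(v^*) + c_3\operatorname{Im}(v^*)\bigr) \in \mathcal{S}$, i.e., $A\mathcal{S}\subseteq\mathcal{S}$. Since \eqref{eq: L_mod_control} is linear and time-invariant, $p(t) = e^{At}p(0)$. Every power $A^k$ maps $\mathcal{S}$ into itself, so the series defining $e^{At}$ also keeps $\mathcal{S}$ invariant, because $\mathcal{S}$ is a closed subspace. Hence $p(0)\in\mathcal{S}$ implies $p(t)\in\mathcal{S}$ for all $t\ge 0$.

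Alternatively, the velocity just computed is an affine image of $v^*$ whenever $p\in\mathcal{S}$, by Lemma \ref{lem: addition}. So $v(t)\in\mathcal{M}$, and Lemma \ref{lem: invariance} applies directly. I would present the linear-algebra argument as the main proof and mention this connection as a remark.
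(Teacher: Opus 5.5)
Your proposal is correct, and its computational core matches the paper's. The paper also uses $KLp=0$ on $\mathcal{S}$ and $MB^\top\mathbf{1}_n=0$, and it uses the fact that $MB^\top$ is real to split \eqref{eq: MB_pstar_vstar} into real and imaginary parts. This gives $MB^\top T_{\Delta'}(p^*)=c_2'\operatorname{Re}(v^*)+c_3'\operatorname{Im}(v^*)$ in \eqref{eq: MB_p_hat}. The two arguments differ in how invariance is concluded. The paper rewrites this velocity as $T_{\Delta'}(v^*)-c_1'\mathbf{1}_n$ and uses Lemma~\ref{lem: addition} to place it in $\mathcal{M}$ whenever $p\in\mathcal{S}$. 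It then invokes Lemma~\ref{lem: invariance}, which integrates a velocity lying in $\mathcal{S}$. You bypass $\mathcal{M}$ entirely: $\operatorname{Re}(v^*)$ and $\operatorname{Im}(v^*)$ lie in $\mathcal{S}$ simply because the basis $\mathcal{B}_\mathcal{S}$ is real. Hence $A\mathcal{S}\subseteq\mathcal{S}$ for $A=-hKL+\kappa MB^\top$, and $e^{At}$ preserves $\mathcal{S}$. Your route is shorter and also logically tighter. The paper only shows $v\in\mathcal{M}$ at states already in $\mathcal{S}$, whereas Lemma~\ref{lem: invariance} takes $v(t)\in\mathcal{M}$ along the whole trajectory as a hypothesis. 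The paper's chain therefore tacitly relies on the invariance it is trying to prove, unless it is supplemented by a uniqueness or subspace-invariance argument. Your matrix-exponential step supplies exactly that argument. What the paper's phrasing buys is the interpretation that the velocity on $\mathcal{S}$ is a desired collective motion in $\mathcal{M}$, which is used later. Your explicit formula $\dot p=\kappa\bigl(c_2\operatorname{Re}(v^*)+c_3\operatorname{Im}(v^*)\bigr)$ delivers that membership as well, as your closing remark indicates.
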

\begin{proof}
    When $p \in \mathcal{S}$, since $p^*$ and $L$ are given as in Lemma \ref{lem: L_eigen}, we have that the term $-hKLp = 0$ in \eqref{eq: L_mod_control}. Regarding the second term in \eqref{eq: L_mod_control}, the condition \eqref{eq: MB_pstar_vstar} and $MB^\top \mathbf{1}_n = 0$ imply that, for any affine transformation $T_{\Delta'}$, we have
    \begin{align} \label{eq: MB_p_hat}
        \scalemath{0.9}{MB^\top T_{\Delta^\prime}(p^*)} & = \scalemath{0.93}{c_1^\prime MB^\top\mathbf{1}_n + c_2^\prime \operatorname{Re}(MB^\top p^*) + c_3^\prime \operatorname{Im}(MB^\top p^*)}\nonumber\\
        &\scalemath{0.9}{= c_2^\prime \operatorname{Re}(v^*) + c_3^\prime \operatorname{Im}(v^*) \nonumber}\\ 
        &\scalemath{0.9}{= T_{\Delta^\prime}(v^*) - c_1^\prime \mathbf{1}_n}.
    \end{align}
    By Lemma \ref{lem: addition}, it follows that $MB^\top T_{\Delta^\prime}(p^*) \in \mathcal{M}$, and hence $v(t) \in \mathcal{M}$ whenever $p(t) \in \mathcal{S}$. Therefore, since $p(0) \in \mathcal{S}$ and $v^* = T_\Delta(p^*)$, Lemma \ref{lem: invariance} ensures that $\mathcal{S}$ is invariant $\forall t \geq 0$.
\end{proof}

\begin{remark}
    Note that the subtraction of $c_1^\prime \mathbf{1}_n$ in \eqref{eq: MB_p_hat} reflects the physical interpretation that agent velocities are determined solely based on relative positions, i.e., $v = \kappa MB^\top p$. However, this does not imply that translations are suppressed altogether. Indeed, evaluating $T_{\Delta^\prime}(T_\Delta(p^*))$ in \eqref{eq: MB_p_hat}, as done in \eqref{eq: T_delta_T_delta}, introduces a term proportional to $\mathbf{1}_n$ that does not vanish after subtracting $c_1^\prime \mathbf{1}_n$. This point will be further clarified in the subsequent section.
\end{remark}

Similarly as in \eqref{eq: w_condition}, to ensure that \eqref{eq: MB_pstar_vstar} is satisfied, the motion parameters $\mu_{ij}$ have to satisfy the linear constraints
\begin{equation} \label{eq: v_i_body}
     \sum_{j\in\mathcal{N}_i} \mu_{ij}(p_i^* - p_j^*) = \sum_{j\in\mathcal{N}_i} \mu_{ij} z_{ij}^* = v_i^*, \quad \forall i \in \mathcal{V},
\end{equation}
where $v_i^* \in \mathbb{C}$ is the reference velocity for the agent $i$. Note that, in order to find the $\mu_{ij}$'s that satisfy \eqref{eq: v_i_body} for an arbitrary $v_i^*$, it is sufficient for the agent $i$ to have at least $m$ neighbors with the corresponding $z_{ij}^*$ being linearly independent, where $m\in\mathbb{N}$ is the spatial dimension. Indeed, this is the case if $p^*$ is \emph{generic} and the framework is \emph{globally rigid}. Notably, these same conditions are also necessary for the feasibility of the formation maintenance constraints in \eqref{eq: w_condition}. Indeed, \eqref{eq: v_i_body} is less restrictive, as it does not impose the symmetry condition $w_{ij} = w_{ji}$ required for static formation weights \cite{zhao2018affine}. In contrast, the motion parameters $\mu_{ij}$ are generally asymmetric, i.e., $\mu_{ij} \neq \mu_{ji}$. This asymmetry allows each agent $i$ to solve \eqref{eq: v_i_body} independently, further supporting a \textit{distributed implementation}.

\begin{remark}
    We remind that throughout this paper, the weights $w_{ij}$ in (\ref{eq: u_control}) and the motion parameters $\mu_{ij}$ in (\ref{eq: m}) are elements of $\mathbb{R}$, and only the positions and velocities are encoded as elements of $\mathbb{C}$ for the sake of facilitating an easier analysis of the affine formation control problem. Ultimately, the eventual implementation of our technique uses only real numbers, including the positions on the plane.
\end{remark}

\begin{figure}
    \centering
    \begin{tikzpicture}[join=round, scale=0.95]
        \begin{scope}[shift={(-5.25,0)}]
            \draw[dashed](1.5,0)--(0,0)--(0,1.5)--(1.5,1.5)--(1.5,0)--(0,1.5);
            \draw[dashed](0,0)--(1.5,1.5);
            \filldraw(0,0) circle (2pt);
            \filldraw(1.5,0) circle (2pt);
            \filldraw(1.5,1.5) circle (2pt);
            \filldraw(0,1.5) circle (2pt);
            \draw[draw=black,arrows=->](.75,.75)--(1.125,.75);
            \draw[draw=black,arrows=->](.75,.75)--(.75,1.125);
            \draw[draw=black,arrows=->](.6,-0.75)--(.1,-0.75);
            \draw[draw=black,arrows=->](.6,-0.75)--(.6,-0.25);
                \node at (.8,.45) {\small $O_b$ \normalsize};\node at (.9,-0.7) {\small $O_g$ \normalsize};\node at (-.25,-.25) {\small $p_1^*$ \normalsize}; \node at (1.25,1.75) {\small $p_3^*$ \normalsize};\node at (-.25,1.75) {\small $p_2^*$ \normalsize}; \node at (1.25,-.25) {\small $p_4^*$ \normalsize};
        \end{scope}
        \begin{scope}[shift={(-3,0)},scale=0.35]
            \filldraw(0,0) circle (2pt);
            \filldraw(1.5,0) circle (2pt);
            \filldraw(1.5,1.5) circle (2pt);
            \filldraw(0,1.5) circle (2pt);
            \draw[dashed](1.5,0)--(0,0)--(0,1.5)--(1.5,1.5)--(1.5,0)--(0,1.5);
            \draw[dashed](0,0)--(1.5,1.5);
            \draw[draw=black,color=blue,arrows=->](0,0)--(1,0);
            \draw[draw=black,color=blue,arrows=->](1.5,0)--(2.5,0);
            \draw[draw=black,color=blue,arrows=->](1.5,1.5)--(2.5,1.5);
            \draw[draw=black,color=blue,arrows=->](0,1.5)--(1,1.5);
            \node[color=blue, text width=1cm] at (0.2,-1) {\scriptsize $M_{v_x}B^\top p^*$};
        \end{scope}
        \begin{scope}[shift={(-3,1.25)},scale=0.35]
            \filldraw(0,0) circle (2pt);
            \filldraw(1.5,0) circle (2pt);
            \filldraw(1.5,1.5) circle (2pt);
            \filldraw(0,1.5) circle (2pt);
            \draw[dashed](1.5,0)--(0,0)--(0,1.5)--(1.5,1.5)--(1.5,0)--(0,1.5);
            \draw[dashed](0,0)--(1.5,1.5);
            \draw[draw=black,color=blue,arrows=->](0,0)--(0,1);
            \draw[draw=black,color=blue,arrows=->](1.5,0)--(1.5,1);
            \draw[draw=black,color=blue,arrows=->](1.5,1.5)--(1.5,2.5);
            \draw[draw=black,color=blue,arrows=->](0,1.5)--(0,2.5);
            \node[color=blue, text width=1cm] at (0.2,3.25) {\scriptsize $M_{v_y}B^\top p^*$};
        \end{scope}
        \begin{scope}[shift={(-1.5,1.25)},scale=0.35]
            \filldraw(0,0) circle (2pt);
            \filldraw(1.5,0) circle (2pt);
            \filldraw(1.5,1.5) circle (2pt);
            \filldraw(0,1.5) circle (2pt);
            \draw[dashed](1.5,0)--(0,0)--(0,1.5)--(1.5,1.5)--(1.5,0)--(0,1.5);
            \draw[dashed](0,0)--(1.5,1.5);
            \draw[draw=black,color=applegreen,arrows=->](0,0)--(-1,0);
            \draw[draw=black,color=applegreen,arrows=->](1.5,0)--(2.5,0);
            \draw[draw=black,color=applegreen,arrows=->](1.5,1.5)--(2.5,1.5);
            \draw[draw=black,color=applegreen,arrows=->](0,1.5)--(-1,1.5);
            \node[color=applegreen, text width=1cm] at (0.1,2.5) {\scriptsize $M_{v_{a_x}}B^\top p^*$};
        \end{scope}
        \begin{scope}[shift={(-1.5,0)},scale=0.35]
            \filldraw(0,0) circle (2pt);
            \filldraw(1.5,0) circle (2pt);
            \filldraw(1.5,1.5) circle (2pt);
            \filldraw(0,1.5) circle (2pt);
            \draw[dashed](1.5,0)--(0,0)--(0,1.5)--(1.5,1.5)--(1.5,0)--(0,1.5);
            \draw[dashed](0,0)--(1.5,1.5);
            \draw[draw=black,color=applegreen,arrows=->](0,0)--(0,-1);
            \draw[draw=black,color=applegreen,arrows=->](1.5,0)--(1.5,-1);
            \draw[draw=black,color=applegreen,arrows=->](1.5,1.5)--(1.5,2.5);
            \draw[draw=black,color=applegreen,arrows=->](0,1.5)--(0,2.5);
            \node[color=applegreen, text width=1cm] at (0.1,-2) {\scriptsize $M_{v_{a_y}}B^\top p^*$};
        \end{scope}
        \begin{scope}[shift={(.4,1.25)},scale=0.35]
        \filldraw(0,0) circle (2pt);
        \filldraw(1.5,0) circle (2pt);
        \filldraw(1.5,1.5) circle (2pt);
        \filldraw(0,1.5) circle (2pt);
        \draw[dashed](1.5,0)--(0,0)--(0,1.5)--(1.5,1.5)--(1.5,0)--(0,1.5);
        \draw[dashed](0,0)--(1.5,1.5);
        \draw[draw=black,color=red,arrows=->](0,0)--(-1,0);
        \draw[draw=black,color=red,arrows=->](1.5,0)--(0.5,0);
        \draw[draw=black,color=red,arrows=->](1.5,1.5)--(2.5,1.5);
        \draw[draw=black,color=red,arrows=->](0,1.5)--(1,1.5);
        \node[color=red, text width=1cm] at (-0.4,2.5) {\scriptsize $M_{v_{h_x}}B^\top p^*$};
        \end{scope}
        \begin{scope}[shift={(.4,0)},scale=0.35]
        \filldraw(0,0) circle (2pt);
        \filldraw(1.5,0) circle (2pt);
        \filldraw(1.5,1.5) circle (2pt);
        \filldraw(0,1.5) circle (2pt);
        \draw[dashed](1.5,0)--(0,0)--(0,1.5)--(1.5,1.5)--(1.5,0)--(0,1.5);
        \draw[dashed](0,0)--(1.5,1.5);
        \draw[draw=black,color=red,arrows=->](0,0)--(0,-1);
        \draw[draw=black,color=red,arrows=->](1.5,0)--(1.5,1);
        \draw[draw=black,color=red,arrows=->](1.5,1.5)--(1.5,2.5);
        \draw[draw=black,color=red,arrows=->](0,1.5)--(0,0.5);
        \node[color=red, text width=1cm] at (-0.4,-2) {\scriptsize $M_{v_{h_y}}B^\top p^*$};
        \end{scope}
        \begin{scope}[shift={(1.5,1)},scale=1]
            \draw[dashed](0,-1.75)--(0,1.5);
        \end{scope}
        \begin{scope}[shift={(2.1,0)},scale=0.35]
            \filldraw(0,0) circle (2pt);
            \filldraw(1.5,0) circle (2pt);
            \filldraw(1.5,1.5) circle (2pt);
            \filldraw(0,1.5) circle (2pt);
            \draw[dashed](1.5,0)--(0,0)--(0,1.5)--(1.5,1.5)--(1.5,0)--(0,1.5);
            \draw[dashed](0,0)--(1.5,1.5);
            \draw[draw=black,color=red,arrows=->](0,0)--(.75,-.75);
            \draw[draw=black,color=red,arrows=->](1.5,0)--(2.25,.75);
            \draw[draw=black,color=red,arrows=->](1.5,1.5)--(0.75,2.25);
            \draw[draw=black,color=red,arrows=->](0,1.5)--(-0.75,0.75);
            \node[color=red, text width=1cm] at (0.4,-2) {\scriptsize $M_{v_\omega}B^\top p^*$};
        \end{scope}
        \begin{scope}[shift={(2.1,1.25)},scale=0.35]
            \filldraw(0,0) circle (2pt);
            \filldraw(1.5,0) circle (2pt);
            \filldraw(1.5,1.5) circle (2pt);
            \filldraw(0,1.5) circle (2pt);
            \draw[dashed](1.5,0)--(0,0)--(0,1.5)--(1.5,1.5)--(1.5,0)--(0,1.5);
            \draw[dashed](0,0)--(1.5,1.5);
            \draw[draw=black,color=red,arrows=->](0,0)--(-.75,-.75);
            \draw[draw=black,color=red,arrows=->](1.5,0)--(.85,.65);
            \draw[draw=black,color=red,arrows=->](1.5,1.5)--(2.25,2.25);
            \draw[draw=black,color=red,arrows=->](0,1.5)--(.65,.85);
            \node[color=red, text width=1cm] at (0.4,3.25) {\scriptsize $M_{v_s} B^\top p^*$};
        \end{scope}
        
    \end{tikzpicture}
	\caption{Square formation in 2D with $\mathcal{E}$ derived from a complete graph of four nodes, so that the associated framework is universally rigid. The reference shape $p^*$ for the affine formation control is defined relative to the body-fixed frame $\mathcal{O}_b$. Each agent’s reference velocity $v_i^*$ is constructed as a linear combination of its relative positions $z_{ij}$, $j\in\mathcal{N}_i$, employing the motion parameters $\mu_{ij}$. To construct all possible affine collective motions in 2D, we show a basis consisting of six independent group motions: two orthogonal translation velocities (shown in blue), two orthogonal scaling motions (in green), and two orthogonal shearing motions (in red). As an alternative, the two shear components can be replaced (as shown to the right of the vertical dashed line) by a \textit{cross-scaling} motion and a rotational (spinning) motion, yielding an equivalent affine basis.}
    \label{fig: affine_v}
\end{figure}
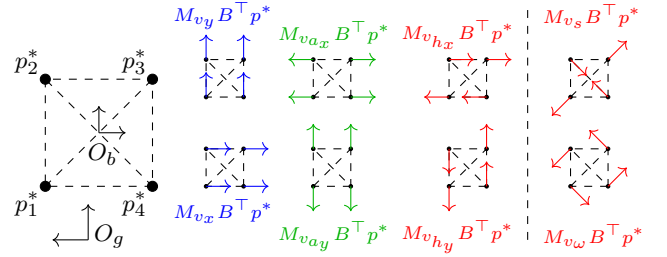

\begin{example}
    Let us show the design of $\mu_{ij}$ for the spinning motion in \autoref{fig: affine_v}. It is clear that $v_1^* = \frac{1}{\sqrt{2}}(1-\iota)$, ${v}_2^* = \frac{1}{\sqrt{2}}(-1-\iota)$, $ {v}_3^* = \frac{1}{\sqrt{2}}(-1+\iota)$ and $v_4^* = \frac{1}{\sqrt{2}}(1+\iota)$ up to an arbitrary scale (angular speed) factor. In order to satisfy \eqref{eq: v_i_body}, assuming the square has unit side length, then the motion parameters of the agent $1$ for the spinning motion are $\mu_{12} = -\mu_{14} = \frac{1}{\sqrt{2}}$, and $\mu_{13} = 0$, since $(p^*_1 - p^*_2) = -\iota$, and $(p^*_1 - p^*_4) = -1$. Note that this is not the only choice, as $(p^*_1 - p^*_3) = (-1-\iota)$ is unused.
\end{example}

Regarding the design of the reference collective motion, the only requirement according to Lemma \ref{lem: invariance} is that $v^* = T_\Delta(p^*)$. We propose the following expression
\begin{align} \label{eq: v_star}
    v^* = T_{\Delta_v}(p^*) = (v_x + \iota v_y) \mathbf{1}_n &+ (v_{a_x} + \iota v_{h_y}) \operatorname{Re}(p^*) \nonumber \\ &+ (v_{h_x} + \iota v_{a_y}) \operatorname{Im}(p^*),
\end{align}
with $\Delta_v = [v_x\; v_y\; v_{a_x}\; v_{a_y}\; v_{h_x}\; v_{h_y}] \in \mathbb{R}^6$. Here, $v_{\{x,y\}}$ account for horizontal and vertical translation velocity, $v_{\{a_x,a_y\}}$ for scaling parallel to the $x$- and $y$-axis of $\mathcal{O}_b$, respectively, and $v_{\{h_x,h_y\}}$ for the shearing velocity along the same axes. 

Similarly, the matrix $M$ in \eqref{eq: MB_pstar_vstar} can be decomposed into six components, namely
\begin{align} \label{eq: M_split}
		M = v_x M_{v_x} + v_y M_{v_y} &+ v_{a_x} M_{v_{a_x}} + v_{a_y} M_{v_{a_y}} \nonumber\\
          & + v_{h_x} M_{v_{h_x}} + v_{h_y} M_{v_{h_y}},
\end{align}
where each matrix $M_{\{\cdot\}}  \in \mathbb{R}^{|\mathcal{V}|\times |\mathcal{Z}|}$ is constructed using the same formulation as in \eqref{eq: m}, but tailored to its corresponding component of the reference collective motion, as in \autoref{fig: affine_v}.

Importantly, the basis matrices in \eqref{eq: M_split} are precomputed independently by each agent $i$ from \eqref{eq: v_i_body}, depending solely on $(\mathcal{G},p^*)$ and requiring recomputation only if these change. The design is thus decoupled from the high-level command $\Delta_v$: once $(\mathcal{G},p^*)$ is fixed, transitions between geometries reduce to updating $\Delta_v$, without re-solving \eqref{eq: v_i_body} or modifying $p^*$ or $L$, yielding a modular architecture suited to frequent maneuver updates. Moreover, since all admissible configurations lie in the invariant subspace $\mathcal{S}$, agents may adopt new $\Delta_v$ values asynchronously: our convergence guarantees ensure stability across transitions, with timing affecting only the convergence rate.

\begin{remark}[Scope of the distributed implementation]
The control law \eqref{eq: L_mod_control} runs on each agent using only local measurements $z_{ij}$, with $\mu_{ij}$ computed locally from $(\mathcal{G},p^*)$; the runtime control is thus fully distributed, with no leader and no inter-agent state tracking. The only shared quantity is $\Delta_v$, needed to evaluate $v_i^*$ in \eqref{eq: v_i_body}: not a feedback signal but a high-level operational command, analogous to a mission parameter, that selects the maneuver and sits one layer above the distributed loop. This contrasts with leader-follower schemes, where the shared information is a state tracked continuously inside the loop; here $\Delta_v$ is constant between switches and carries only six real numbers, so it can be preloaded, broadcast, or agreed by consensus over $\mathcal{G}$ between switches.
\end{remark}

\begin{remark}
    Note that the basis of six orthogonal motions in \eqref{eq: v_star} is not unique. For engineering or application purposes, one might, for example, consider the rotational motion $v_\omega = v_{h_y} - v_{h_x}$ and the combined shearing motion $v_s = v_{h_y} + v_{h_x}$ instead of the two orthogonal shearing motions $v_{\{h_x, h_y\}}$. These two alternative bases are illustrated in \autoref{fig: affine_v}, where $M_{v_\omega} = M_{v_{h_y}} - M_{v_{h_x}}$ and $M_{v_s} = M_{v_{h_y}} + M_{v_{h_x}}$.
\end{remark}


\subsection{Analytical solution of the stationary motion} \label{sec: anal_solution}

The analytical solution of \eqref{eq: L_mod_control} is given by
\begin{equation} \label{eq: pt_exp}
    p(t) = e^{-K\tilde Lt}p(0),
\end{equation}
where $e^{-K\tilde Lt}$ denotes the matrix exponential of $-K\tilde Lt$. 
Using the Jordan normal form of $-K\tilde L$, the solution becomes
\begin{equation} \label{eq: pt_exp_jordan}
    p(t) = \sum_{k=1}^n \alpha_k f_k e^{\lambda_k t},
\end{equation}
where $f_k(t,x_k,x_{k-1},\dots,x_{k-g})$, with $g \in \mathbb{N}$ depending on the algebraic and geometric multiplicity of the eigenvalue $\lambda_k \in \mathbb{C}$, are different functions corresponding to linear combinations $(x_k + x_{k-1}t + x_{k-2}\frac{t^2}{2!} + \cdots + x_{l-g}\frac{t^g}{g!})$, depending on the (possibly generalized) eigenvectors $x_k \in \mathbb{C}^n$ and constants $\alpha_k \in \mathbb{C}$ given by the initial condition $p(0)$. Thus, deriving the eigenstructure of $-K\tilde L$ allows explicit expression of the analytical solution of \eqref{eq: L_mod_control}.


Before proving convergence $p(t) \to \mathcal{S}$ for sufficiently large $h$ in \eqref{eq: L_mod_control}, we analyze the solution when $p(0) \in \mathcal{S}$, corresponding to identifying the eigenvalues of $K \tilde L$ whose eigenvectors belong to $\mathcal{S}$. To obtain these eigenvalues, we consider a generic affine transformation $T_\Delta$ from \eqref{eq: p_hat}, the matrix $M$ from Lemma \ref{lem: M}, and the reference collective motion $v^*$ in \eqref{eq: v_star}. Then, using \eqref{eq: MB_p_hat} and simplifying \eqref{eq: T_delta_T_delta} via \eqref{eq: cs}, we obtain
\begin{align} \label{eq: MB_phat_ker}
    MB^\top T_\Delta(p^*) &= T_\Delta(v^*) - c_1 \mathbf{1}_n \nonumber\\
    & = \left[ v_x c_2 + v_y c_3 \right] \mathbf{1}_n \nonumber\\
    & + \left[ v_{a_x} c_2 + v_{h_y} c_3 \right] \operatorname{Re}(p^*)\nonumber\\
    &  + \left[ v_{a_y} c_3 + v_{h_x} c_2 \right] \operatorname{Im}(p^*),
\end{align}
where $c_1,c_2,c_3$ parametrize the affine transformation encoded by $\Delta$. Now, considering the eigenvalue equation for $-K\tilde L$ with corresponding eigenvectors within $\mathcal{S}$, we have that
\begin{equation} \label{eq: eig_eq}
    -K\tilde L x_\lambda = \kappa MB^\top x_\lambda = \lambda x_\lambda,
\end{equation}
where $x_\lambda \in \mathcal{S}$ is the eigenvector associated with the eigenvalue $\lambda \in \mathbb{C}$. Substituting \eqref{eq: MB_phat_ker} in \eqref{eq: eig_eq}, with $T_\Delta(p^*) = x_\lambda$, we obtain the following three linear eigenvalue constraints
\begin{align} \label{eq: eig_constraints}
    \underbrace{
    \left[\begin{matrix}
        0 & v_x & v_y \\
        0 & v_{a_x} & v_{h_y} \\
        0 & v_{h_x} & v_{a_y} \\
    \end{matrix}\right]
    }_{A_{v^*}}
    \underbrace{
    \left[\begin{matrix}
        c_1 \\ c_2 \\ c_3
    \end{matrix}\right]
    }_{[x_\lambda]_{\mathcal{B}_\mathcal{S}}} =
    \frac{\lambda}{\kappa}
    \left[\begin{matrix}
        c_1 \\ c_2 \\ c_3
    \end{matrix}\right].
\end{align}

\begin{remark}
    The constraints in \eqref{eq: eig_constraints} reduce the problem of determining eigenvalues $\lambda$ of $-K\tilde L$ with eigenvectors $x_\lambda \in \mathcal{S}$ to finding eigenvalues and eigenvectors of matrix $A_{v^*}$, significantly reducing complexity. This leads to the following technical result analyzing all solutions of the eigenvector/eigenvalue problem depending on the design of $v^*$ in \eqref{eq: v_star}.
\end{remark}

\begin{table*}[t]\centering
\caption{
Cases of Prop.~\ref{prop: eigen_deg}/Thm.~\ref{thm: analytical_sol}, organized by the linear-part generator $G=\big[\begin{smallmatrix}v_{a_x}&v_{h_y}\\ v_{h_x}&v_{a_y}\end{smallmatrix}\big]$ (eigenvalues $l_\pm/\kappa$) and the translation $w=[v_x,v_y]^\top$. When $\det G=0$, $\eta=[v_{h_y}, -v_{a_x}]$ spans $\ker G$ and $r:=w^\top\eta=v_xv_{h_y}-v_yv_{a_x}$ tests whether translation excites the shape direction $G$ holds fixed. Since $\dot c_1=\kappa\,w^\top[c_2,c_3]^\top$ from \eqref{eq: MB_phat_ker}, each resonant coupling ($r\neq0$) forces an integrator and adds one power of $t$.
}
\vspace{1mm}
\label{tab: cases}
\begin{tabular}{cllll}
\toprule
Case & Deformation ($G$) & Translation coupling & Modes & Resulting collective (affine) motion \\
\midrule
C1) & $\det G\neq0$, diagonalizable & absorbed ($G$ invertible) & $1,\,e^{l_+t},\,e^{l_-t}$ & \emph{Generic}: scaling and/or rotation; translation re-centers \\
C2) & $\det G\neq0$, defective & absorbed ($G$ invertible) & $1,\,e^{lt},\,t\,e^{lt}$ & Uniform scaling with a superposed single-axis shear \\
C3) & $\det G=0,\ \operatorname{tr}G\neq0$ & compatible: $r=0$ & $1,\,1,\,e^{lt}$ & Scaling mode $+$ steady drift; integrator not forced \\
C4) & $\det G=0,\ \operatorname{tr}G\neq0$ & resonant: $r\neq0$ & $1,\,t,\,e^{lt}$ & Scaling mode $+$ ramp drift; integrator forced once \\
C5) & $\det G=0,\ \operatorname{tr}G=0$ & compatible: $r=0$ (or $G=0$) & $1,\,t$ & Constant-rate translation and/or shear (single integration) \\
C6) & $\det G=0,\ \operatorname{tr}G=0$ & resonant: $r\neq0$ & $1,\,t,\,t^2$ & Shear drift integrated again by resonant translation (double) \\
\bottomrule
\end{tabular}
\end{table*}

\begin{prop} \label{prop: eigen_deg}
	Given $p^*$ and $L$ as in Lemma \ref{lem: L_eigen} and $M$ as in Lemma \ref{lem: M}, the three eigenvalues of $-K\tilde L$ with corresponding eigenvectors within $\mathcal{S}$ are $\{0, l_+, l_-\}$ where
   \begin{align}
        l_{\pm} = \; & \kappa\left(\scalemath{0.8}{\frac{v_{a_x} + v_{a_y}}{2}} + \sigma_{\pm}\right) \nonumber \\
        = \; & \kappa\left(\scalemath{0.8}{\frac{v_{a_x} + v_{a_y}}{2} \pm \sqrt{\left(\frac{v_{a_x} - v_{a_y}}{2}\right)^2 + v_{h_x} v_{h_y}}} \, \right). \nonumber
    \end{align}
    Regarding the eigenvectors, consider the following six mutually exclusive cases, based on the design of $v^*$ in \eqref{eq: v_star}:
    \begin{enumerate}
        
        \item [C1)] If $l_\pm \neq 0$ and, in the case where $l_+ = l_-$, it holds that $v_{a_x} = v_{a_y}$ and $h_x = h_y = 0$, then the eigenvector of the zero eigenvalue is given by $\mathbf{1}_n$. The eigenvectors corresponding to $l_\pm$ are given by
        \begin{align} \label{eq: c1_lpm}
            \scalemath{0.85}{
            x_{l_\pm} = \gamma_{\pm}\mathbf{1}_n + v_{h_y}\operatorname{Re}(p^*) + \left(\frac{l_{\pm}}{\kappa} - v_{a_x}\right) \operatorname{Im}(p^*)},
        \end{align}
        with $\gamma_{\pm} = \frac{v_x v_{h_y} + v_y (l_{\pm}/\kappa - v_{a_x})}{l_{\pm}/\kappa}$, except when $l_+ = l_-$, $v_{a_x} = v_{a_y}$ and $v_{h_x} = v_{h_y} = 0$. In this case, the eigenvectors are given by
        \begin{align}
            &\scalemath{0.85}{
            x_{l_+} = v_x \mathbf{1}_n + \left(\frac{v_{a_x} + v_{a_y}}{2}\right)\operatorname{Re}(p^*)}, \label{eq: c1_lp}\\
            &\scalemath{0.85}{
            x_{l_-} = v_y \mathbf{1}_n + \left(\frac{v_{a_x} + v_{a_y}}{2}\right)\operatorname{Im}(p^*)}. \label{eq: c1_lm}
        \end{align}

        \item [C2)] If $l_\pm \neq 0$, with $l_+ = l_-$, and it holds that $v_{a_x} = v_{a_y}$ and either $v_{h_x} = 0$ or $v_{h_y} = 0$, then the eigenvector corresponding to the zero eigenvalue is given by $\mathbf{1}_n$. For the $l = l_+ = l-$ eigenvalue, the geometric multiplicity is 1, and its associated chain of generalized eigenvectors is $\{x_l^2, x_l^1\}$, with
        \begin{equation} \label{eq: c2_xl_1}
            \scalemath{0.77}{
            x_l^1 = 
            \begin{cases}
                \begin{array}{rl}
                    \kappa [v_x \mathbf{1}_n + \frac{v_{a_x} + v_{a_y}}{2}\operatorname{Re}(p^*)] & \text{if} \quad v_{h_x} = 0\\
                    \kappa [v_y \mathbf{1}_n + \frac{v_{a_x} + v_{a_y}}{2}\operatorname{Im}(p^*)] & \text{if} \quad v_{h_y} = 0,
                \end{array}
            \end{cases}}
        \end{equation}
        \begin{equation} \label{eq: c2_xl_2}
            \scalemath{0.77}{
            x_l^2 = 
            \begin{cases}
                \begin{array}{rl}
                    \frac{v_y}{v_{h_y}} \mathbf{1}_n + \operatorname{Re}(p^*) + \frac{v_{a_x} + v_{a_y}}{2 v_{h_y}}\operatorname{Im}(p^*) & \text{if} \quad v_{h_x} = 0\\
                    \frac{v_x}{v_{h_x}} \mathbf{1}_n + \frac{v_{a_x} + v_{a_y}}{2v_{h_x}}\operatorname{Re}(p^*) + \operatorname{Im}(p^*) & \text{if} \quad v_{h_y} = 0. 
                \end{array}
            \end{cases}}
        \end{equation}

        \item [C3)] If one of the $l_\pm$ eigenvalues is equal to 0, and either $v_{h_y}v_x = v_{a_x} v_y$ or $v_{h_x} v_y = v_{a_y} v_x$, the eigenvectors corresponding to the nonzero $l_\pm$ remains with the same eigenvector as in \eqref{eq: c1_lpm}, and the ones corresponding to the zero eigenvalues are $\mathbf{1}_n$ and
        \begin{align}
            x_0 &= v_y \operatorname{Re}(p^*) - v_x \operatorname{Im}(p^*). \label{eq: c3_x0}
        \end{align}

        \item [C4)] If one of the $l_\pm$ eigenvalues is equal to 0, and either $v_{h_y} v_x \neq v_{a_x} v_y$ or $v_{h_x} v_y \neq v_{a_y} v_x$, the eigenvectors corresponding to the nonzero $l_\pm$ remains with the same eigenvector as in \eqref{eq: c1_lpm}. For the zero eigenvalues, the geometric multiplicity is 1, and the associated chain of generalized eigenvectors is $\{x_0^2, x_0^1\}$, with
        \begin{align}
            x_0^1 &= \kappa [v_x v_{h_y} - v_{a_x} v_y] \mathbf{1}_n, \label{eq: c4_x0_1} \\
            x_0^2 &= v_{h_y} \operatorname{Re}(p^*) - v_{a_x} \operatorname{Im}(p^*). \label{eq: c4_x0_2}
        \end{align}

        \item [C5)] If $l_+,l_- = 0$, and either $v_{h_y} = 0$ with $v_x = 0$, $v_{h_x} = 0$ with $v_y = 0$, or $v_{h_x}, v_{h_y} = 0$ with either $v_x \neq 0$ or $v_y \neq 0$, then the geometric multiplicity is 2. The chain of generalized eigenvectors associated with the degenerated eigenvalues is $\{x_0^2,x_0^1\}$, with
        \begin{equation} \label{eq: c5_x01}
            \scalemath{0.8}{
            x_0^1 = 
            \begin{cases}
                \begin{array}{rl}
                    \kappa [v_y \mathbf{1}_n + v_{h_y}\operatorname{Re}(p^*)] & \text{if} \quad v_{h_x},v_x = 0\\
                    \kappa [v_x \mathbf{1}_n + v_{h_x}\operatorname{Im}(p^*)] & \text{if} \quad v_{h_y},v_y = 0\\
                    \kappa (v_x + \iota v_y) \mathbf{1}_n & \text{if} \quad 
                    \begin{aligned}
                        &v_{h_x}, v_{h_y} = 0 \\
                        &v_x \neq 0 \text{ or } v_y \neq 0.
                    \end{aligned}
                \end{array}
            \end{cases}}
        \end{equation}
        \begin{equation} \label{eq: c5_x02}
            \scalemath{0.8}{
            x_0^2 = 
            \begin{cases}
                \begin{array}{rl}
                    \operatorname{Im}(p^*) & \text{if} \quad v_{h_x}, v_x = 0\\
                    \operatorname{Re}(p^*) & \text{if} \quad v_{h_y}, v_y = 0\\                 
                    v_x \operatorname{Re}(p^*) + \iota v_y \operatorname{Im}(p^*)
                    & \text{if} \quad 
                    \begin{aligned}
                        &v_{h_x},v_{h_y} = 0 \\
                        &v_x \neq 0 \text{ or } v_y \neq 0.
                    \end{aligned}
                \end{array}
            \end{cases}}
        \end{equation}
        and for the non-degenerated eigenvalue, the associated eigenvector is
        \begin{equation} \label{eq: c5_y0}
            \scalemath{0.8}{
            y_0 = 
            \begin{cases}
                \begin{array}{rl}
                    v_{h_y} \mathbf{1}_n - v_y \operatorname{Re}(p^*) & \text{if} \quad v_{h_x}, v_x = 0\\
                    v_{h_x}\mathbf{1}_n - v_x \operatorname{Im}(p^*) & \text{if} \quad v_{h_y},v_y = 0\\
                    v_y \operatorname{Re}(p^*) - v_x \operatorname{Im}(p^*) & \text{if} \quad 
                    \begin{aligned}
                        &v_{h_x}, v_{h_y} = 0 \\
                        &v_x \neq 0 \text{ or } v_y \neq 0.
                    \end{aligned}
                \end{array}
            \end{cases}}
        \end{equation}

        \item [C6)] If $l_+,l_- = 0$ and either $v_{h_x},v_x \neq 0$ or $v_{h_y}, v_y \neq 0$, then the geometric multiplicity is 1, and the chain of generalized eigenvectors is $\{x_0^3,x_0^2,x_0^1\}$, with
        \begin{equation} \label{eq: c6_x01}
            \scalemath{0.9}{
            x_0^1 = 
            \begin{cases}
                \begin{array}{rl}
                    \kappa^2 v_y^2 v_{h_x} \mathbf{1}_n & \text{if} \quad v_{h_x}, v_y \neq 0\\
                    \kappa^2 v_x^2 v_{h_y} \mathbf{1}_n & \text{if} \quad v_{h_y}, v_x \neq 0,
                \end{array}
            \end{cases}}
        \end{equation}
        \begin{equation} \label{eq: c6_x02}
            \scalemath{0.85}{
            x_0^2 = 
            \begin{cases}
                \begin{array}{rl}
                    \kappa [v_x v_y \mathbf{1}_n + v_y v_{h_x} \operatorname{Im}(p^*)] & \text{if} \quad v_{h_x}, v_y \neq 0\\
                    \kappa [v_x v_y \mathbf{1}_n + v_x v_{h_y} \operatorname{Re}(p^*)] & \text{if} \quad v_{h_y}, v_x \neq 0,
                \end{array}
            \end{cases}}
        \end{equation}
        \begin{equation} \label{eq: c6_x03}
            \scalemath{0.9}{
            x_0^3 = 
            \begin{cases}
                \begin{array}{rl}
                    v_y \operatorname{Re}(p^*) & \text{if} \quad v_{h_x}, v_y \neq 0\\
                    v_x \operatorname{Im}(p^*) & \text{if} \quad v_{h_y}, v_x \neq 0.
                \end{array}
            \end{cases}}
        \end{equation}
        
    \end{enumerate}
\end{prop}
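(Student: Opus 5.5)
The plan is to reduce everything to the $3\times 3$ matrix $A_{v^*}$ in \eqref{eq: eig_constraints}. By Lemma \ref{lem: L_eigen}, $L$ vanishes on $\mathcal{S}$, and by Lemma \ref{lem: M}, $\mathcal{S}$ is invariant under $MB^\top$. Hence $-K\tilde L$ restricted to $\mathcal{S}$ equals $\kappa MB^\top|_{\mathcal{S}}$. By \eqref{eq: MB_phat_ker}, its matrix in the basis $\mathcal{B}_\mathcal{S}$ is $\kappa A_{v^*}$. Since $\mathcal{B}_\mathcal{S}$ is a basis and $\mathcal{S}$ is invariant, the eigenvalues, Jordan structure and (generalized) eigenvectors of $-K\tilde L$ inside $\mathcal{S}$ are exactly those of $\kappa A_{v^*}$, mapped back through $[c_1\,c_2\,c_3]\mapsto c_1\mathbf{1}_n+c_2\operatorname{Re}(p^*)+c_3\operatorname{Im}(p^*)$.

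Write $A_{v^*}=\left[\begin{smallmatrix}0 & w^\top\\ 0 & G\end{smallmatrix}\right]$ with $w=[v_x,v_y]^\top$ and $G=\left[\begin{smallmatrix}v_{a_x}&v_{h_y}\\ v_{h_x}&v_{a_y}\end{smallmatrix}\right]$. The characteristic polynomial is $\lambda\det(\lambda I-G)$, which gives the eigenvalue $0$ (eigenvector $e_1$, i.e.\ $\mathbf{1}_n$) together with the eigenvalues of $G$, namely $\frac{\operatorname{tr}G}{2}\pm\sqrt{(\operatorname{tr}G/2)^2-\det G}$. Scaling by $\kappa$ yields $l_\pm$. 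For a nonzero eigenvalue $\ell=l/\kappa$ with $G$-eigenvector $y$, the first row of $(A_{v^*}-\ell I)x=0$ gives $c_1=w^\top y/\ell$. Choosing $y=[v_{h_y},\,\ell-v_{a_x}]^\top$ reproduces $\gamma_\pm$ and \eqref{eq: c1_lpm}. In the scalar case $G=\ell I$ this $y$ is zero, so the standard basis vectors $e_1,e_2$ of $G$ must be taken instead, which gives \eqref{eq: c1_lp}--\eqref{eq: c1_lm}.

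The cases then split by the Jordan form of $G$ and by how $w$ couples to $\ker G$; I would follow the organization of Table \ref{tab: cases}.

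\textbf{C2.} $G$ is a defective nonzero double eigenvalue. I would build the chain by solving $(A_{v^*}-\ell I)x^2=x^1$, with $c_1$ again fixed by the first row, and check the displayed vectors directly.

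\textbf{C3--C4.} Here $\det G=0$ and $\operatorname{tr}G\neq0$, so $0$ is a double eigenvalue of $A_{v^*}$. Take $\eta=[v_{h_y},-v_{a_x}]^\top\in\ker G$. Then $A_{v^*}[0;\eta]=(w^\top\eta)e_1$. If $r=w^\top\eta=0$, there are two genuine eigenvectors; this gives \eqref{eq: c3_x0}, after noting that when $r=0$ the vector $[v_y,-v_x]$ is parallel to $\eta$. If $r\neq0$, we get a chain of length 2 with $x_0^1=\kappa r\mathbf{1}_n$, matching \eqref{eq: c4_x0_1}--\eqref{eq: c4_x0_2}.

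\textbf{C5--C6.} Here $G$ is nilpotent, so $A_{v^*}$ is nilpotent, and I would compute the rank of $A_{v^*}$ and $A_{v^*}^2$. Nilpotent $G\neq0$ forces $v_{a_x}=-v_{a_y}$ and $v_{a_x}^2+v_{h_x}v_{h_y}=0$; in the listed subcases this reduces to one shear entry nonzero. $A_{v^*}^2$ has first row $w^\top G$ and otherwise zero entries. The two possibilities are:
\begin{itemize}
\item $w^\top G=0$: the index is $2$, and generalized eigenvectors come from picking $x^2$ with $A_{v^*}x^2\neq0$ and setting $x^1=\kappa A_{v^*}x^2$. This gives \eqref{eq: c5_x01}--\eqref{eq: c5_x02}, with $y_0$ spanning the remaining kernel direction, which is \eqref{eq: c5_y0}.
\item $w^\top G\neq0$: the index is $3$, giving the chain \eqref{eq: c6_x01}--\eqref{eq: c6_x03} via $x^{k-1}=\kappa A_{v^*}x^{k}$.
\end{itemize}

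The main obstacle is not conceptual but bookkeeping: making the stated case hypotheses mutually exclusive and exhaustive, and checking that they match the intrinsic conditions ($G$ diagonalizable, nilpotent, $r=0$ or not, $w^\top G=0$ or not). The nilpotent case is where the paper's conditions such as ``$v_{h_x},v_x\neq0$'' implicitly assume $v_{a_x}=v_{a_y}=0$. I would first prove the classification in the intrinsic form and then verify each explicit vector by direct substitution into $A_{v^*}x=(\lambda/\kappa)x$ or $A_{v^*}x^{k}=x^{k-1}/\kappa$, treating the paper's parametric conditions as sufficient descriptions of each intrinsic case.
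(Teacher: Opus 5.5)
Your reduction is sound and follows the paper's own route. On the invariant subspace $\mathcal{S}$, $-K\tilde L$ acts as $\kappa A_{v^*}$ in the basis $\mathcal{B}_\mathcal{S}$, and the appendix likewise solves $(A_{v^*}-\frac{\lambda}{\kappa}I_3)c=0$ and the chain equations \eqref{eq: prop1_system_gen} case by case. Your split into $G$ and $w$ is a cleaner organization of the same computation.

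The gap is the last step, where you claim your constructions reproduce the displayed vectors and that substitution will confirm them. Your intermediate choices silently assume certain quantities are nonzero. Exactly where they vanish, the displayed vectors are not (generalized) eigenvectors, so this step cannot be completed.

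(i) $y=[v_{h_y},\,\ell-v_{a_x}]^\top$ is zero whenever $v_{h_y}=0$ and $\ell=v_{a_x}$, not only when $G=\ell I$. Anisotropic scaling ($v_{a_x}=1$, $v_{a_y}=2$, all else zero) is in C1 with $l_-=\kappa$, and \eqref{eq: c1_lpm} returns $x_{l_-}=0$. There you must use the other row of $G-\ell I$, i.e.\ $y=[\ell-v_{a_y},\,v_{h_x}]^\top$.

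(ii) $\eta=[v_{h_y},\,-v_{a_x}]^\top$ spans $\ker G$ only if the first row of $G$ is nonzero, so $r=w^\top\eta$ is not the right resonance test. For $v_x=v_{a_y}=1$, all else zero, you get $\eta=0$ and $r=0$, yet $A_{v^*}e_2=e_1$, so $0$ is defective. In that configuration the hypotheses of C3 and C4 both hold. The vector $x_0$ in \eqref{eq: c3_x0} equals $-\operatorname{Im}(p^*)$, which is the eigenvector of $l_+=\kappa$, and \eqref{eq: c4_x0_1}--\eqref{eq: c4_x0_2} vanish. Likewise, ``$[v_y,-v_x]$ parallel to $\eta$'' allows the factor zero. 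Pure $x$-scaling ($v_{a_x}=1$, all else zero) is in C3, but $x_0=0$ and \eqref{eq: c1_lpm} also gives $x_{l_+}=0$. The true eigenvectors are $\operatorname{Re}(p^*)$ for $l_+$ and $\operatorname{Im}(p^*)$ for $0$.

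(iii) In the third subcase of C5, your recipe gives $\kappa A_{v^*}[0,\,v_x,\,\iota v_y]^\top=\kappa(v_x^2+\iota v_y^2)e_1$. This is not $[x_0^1]_{\mathcal{B}_\mathcal{S}}$ from \eqref{eq: c5_x01} unless $v_x,v_y\in\{0,1\}$.

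(iv) With $v_{a_x}=v_{a_y}=0$ one has $w^\top G=[v_yv_{h_x},\,v_xv_{h_y}]$. So the hypotheses written in C5 and C6 are swapped relative to the labels on their formulas. For example, $v_{h_y}=v_x=0$ with $v_yv_{h_x}\neq 0$ satisfies the C5 text but has nilpotency index $3$. These hypotheses therefore cannot serve as sufficient descriptions of your intrinsic cases.

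The paper's appendix has the same defects: it divides by $l_\pm/\kappa-v_{a_x}$ in C1 and by $v_x$ in C3, and its third C5 subcase violates \eqref{eq: prop1_system_gen}. So your route lacks nothing relative to the paper. As a proof, though, your intrinsic analysis can only deliver a corrected statement:
\begin{enumerate}
\item take the eigenvector of $G-\ell I$ from a nonzero row;
\item take $\ker G$ from a nonzero row of $G$, and use it (not $[v_y,-v_x]^\top$) for the second zero-eigenvector;
\item set $[x_0^1]_{\mathcal{B}_\mathcal{S}}=\kappa A_{v^*}[x_0^2]_{\mathcal{B}_\mathcal{S}}$ in C5;
\item fix the C5/C6 conditions.
\end{enumerate}
The proposition exactly as written, including its claim that the cases are mutually exclusive, cannot be confirmed by substitution.
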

\begin{proof}
  See Subsection \ref{ap: prop_eig} in the Appendix.
\end{proof}

The results in Proposition \ref{prop: eigen_deg} can be directly decoded into $\mathbb{R}^2$, as shown in Remark \ref{remark} for Lemma \ref{lem: L_eigen}. Having identified the analytical expressions of the three eigenvectors of $-K\tilde L$ spanning $\mathcal{S}$, along with their corresponding eigenvalues, we are now able to present the analytical solution of \eqref{eq: L_mod_control} describing the agents’ stationary motion.


\begin{theorem} \label{thm: analytical_sol}
    Consider $p^*$ and $L$ as in Lemma \ref{lem: L_eigen}, $M$ as in Lemma \ref{lem: M}, $v^*$ as in \eqref{eq: v_star}, and $u = -K\tilde Lp$ as the control law for the dynamics in \eqref{eq: dyn}. If $p(0) \in \mathcal{S}$, then, based on the six cases outlined in Proposition \ref{prop: eigen_deg},
    \begin{equation} \label{eq: p_par_t}
        p(t) = 
        \begin{cases} 
            \alpha_1 \mathbf{1}_n  + \alpha_2 x_{l_+}e^{l_+ t} + \alpha_3 x_{l_-}e^{l_- t} & \mbox{if } \textit{C1)} \\

            \alpha_1 \mathbf{1}_n  + [\alpha_2 x_l^1  + \alpha_3 (x_l^2 + x_l^1 t)] e^{l t} & \mbox{if } \textit{C2)} \\

            \alpha_1 \mathbf{1}_n  + \alpha_2 x_0 + \alpha_3 x_{l}e^{l t} & \mbox{if } \textit{C3)} \\

            \alpha_1 x_0^1  + \alpha_2 (x_0^2 + x_0^1 t) + \alpha_3 x_{l}e^{l t} & \mbox{if } \textit{C4)} \\

            \alpha_1 x_0^1  + \alpha_2 (x_0^2 + x_0^1 t) + \alpha_3 y_0 & \mbox{if } \textit{C5)} \\

            \alpha_1 x_0^1  + \alpha_2 (x_0^2 + x_0^1 t) \\
            \qquad \quad + \; \alpha_3 (x_0^3 + x_0^2 t + x_0^1 \frac{t^2}{2})& \mbox{if } \textit{C6)},
        \end{cases}
    \end{equation}
    for all $t \geq 0$, where $\alpha_{\{1,2,3\}} \in \mathbb{C}$ depend on $p(0)$, and $l \in \mathbb{C}$ represents either the nonzero $l_{\pm}$ eigenvalue or the unique $l_{\pm}$ eigenvalue when $l_+ = l_- \neq 0$.
\end{theorem}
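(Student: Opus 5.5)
The proof reduces the dynamics \eqref{eq: L_mod_control} to a three-dimensional linear ODE on $\mathcal{S}$ and then reads off its solution from the Jordan structure given in Proposition \ref{prop: eigen_deg}.

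\emph{Step 1: $\mathcal{S}$ is invariant.} By Lemma \ref{lem: M}, $p(0)\in\mathcal{S}$ implies $p(t)\in\mathcal{S}$ for all $t\geq 0$. So we may write $p(t)=c_1(t)\mathbf{1}_n+c_2(t)\operatorname{Re}(p^*)+c_3(t)\operatorname{Im}(p^*)$ with coordinates $[p(t)]_{\mathcal{B}_\mathcal{S}}=c(t)\in\mathbb{C}^3$.

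\emph{Step 2: reduced ODE.} On $\mathcal{S}$ we have $-hKLp=0$ by Lemma \ref{lem: L_eigen}. Hence $\dot p=\kappa MB^\top p$. Combining \eqref{eq: MB_phat_ker} with \eqref{eq: eig_constraints}, this becomes $\dot c=\kappa A_{v^*}c$, and therefore $c(t)=e^{\kappa A_{v^*}t}c(0)$. Equivalently, $\mathcal{S}$ is an invariant subspace of $-K\tilde L$, and the restriction $-K\tilde L|_{\mathcal{S}}$ is represented in the basis $\mathcal{B}_\mathcal{S}$ by $\kappa A_{v^*}$. Hence $p(t)=e^{-K\tilde Lt}p(0)$ never leaves $\mathcal{S}$, and only the eigenstructure of this restriction enters \eqref{eq: pt_exp_jordan}.

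\emph{Step 3: Jordan form, case by case.} The eigenvalues of $\kappa A_{v^*}$ are $0$, coming from its zero first column, together with the eigenvalues of $\kappa G$, namely $l_\pm$. Proposition \ref{prop: eigen_deg} supplies, for each of C1)--C6), a basis of $\mathcal{S}$ consisting of eigenvectors and Jordan chains of $-K\tilde L|_{\mathcal{S}}$. Write $p(0)$ in that basis with coefficients $\alpha_1,\alpha_2,\alpha_3$. These exist and are unique because the three vectors listed in each case are linearly independent and span $\mathcal{S}$; this follows from the linear independence of $\mathcal{B}_\mathcal{S}$ and a direct check of the coefficient vectors in each case. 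Then apply the standard formula for the exponential of a Jordan chain. If $(A-\lambda I)x^{k+1}=x^k$ and $(A-\lambda I)x^1=0$, then
\[
e^{At}x^{j}=e^{\lambda t}\sum_{i=0}^{j-1}\frac{t^{i}}{i!}x^{j-i}.
\]
The cases then read as follows.
\begin{itemize}
\item C1): three independent eigenvectors, giving pure exponentials.
\item C2): $\mathbf{1}_n$ together with one chain of length $2$ at $l$, giving $(x_l^2+x_l^1t)e^{lt}$.
\item C3): two zero-eigenvalue eigenvectors and $x_l$.
\item C4): a length-$2$ chain at $0$ and $x_l e^{lt}$.
\item C5): a length-$2$ chain at $0$ and the eigenvector $y_0$.
\item C6): a length-$3$ chain at $0$, producing the terms $x_0^3+x_0^2t+x_0^1t^2/2$.
\end{itemize}
Summing the three contributions in each case yields \eqref{eq: p_par_t}.

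\emph{Main obstacle.} The main difficulty is Step 3's bookkeeping. One must verify that the chains in Proposition \ref{prop: eigen_deg} are normalized exactly so that $(-K\tilde L-\lambda I)x^{k+1}=x^k$ holds with unit coefficient, since otherwise the powers of $t$ would carry extra constants. For instance, the factors $\kappa$ and $\kappa^2$ in $x_0^1$ and $x_0^2$ are what absorb the $\kappa$ in $\dot c=\kappa A_{v^*}c$. The plan is to check each chain relation directly using \eqref{eq: MB_phat_ker}. If a normalization turns out to be off by a scalar, that scalar can be absorbed into the $\alpha_k$.
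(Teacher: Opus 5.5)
Your route is the same as the paper's. Since $\mathcal{S}$ is invariant under $-K\tilde L$, you expand $p(0)\in\mathcal{S}$ in the three (generalized) eigenvectors spanning $\mathcal{S}$, and no other mode of \eqref{eq: pt_exp_jordan} is excited. Representing $-K\tilde L|_{\mathcal{S}}$ by $\kappa A_{v^*}$ is, if anything, cleaner than the paper's appeal to the Jordan form of the full $n\times n$ matrix.

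The gap is the step you defer. Your ``direct check'' that the vectors of Proposition~\ref{prop: eigen_deg} are independent and span $\mathcal{S}$ fails in subcases the Proposition claims to cover. The reason is that the appendix divides by quantities such as $l_\pm/\kappa-v_{a_x}$ or $v_x$, which can vanish. Take pure $x$-scaling: $v_{a_x}=1$ and all other entries of $\Delta_v$ zero. Then $l_+=\kappa$, $l_-=0$ and $v_{h_y}v_x=v_{a_x}v_y$, so this is case C3 and no other. Yet \eqref{eq: c1_lpm} gives $x_{l_+}=\gamma_+\mathbf{1}_n$ with $\gamma_+=0$, and \eqref{eq: c3_x0} gives $x_0=0$. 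The C3 line of \eqref{eq: p_par_t} would then read $p(t)=\alpha_1\mathbf{1}_n$, whereas from $p(0)=p^*$ the true solution is $e^{\kappa t}\operatorname{Re}(p^*)+\iota\operatorname{Im}(p^*)$. Similarly, in C1 with $v_{h_y}=0$ and $v_{a_x}\neq v_{a_y}$, \eqref{eq: c1_lpm} returns the zero vector for the eigenvalue $\kappa v_{a_x}$.

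Your fallback for normalization errors is also invalid. Suppose $(-K\tilde L-\lambda I)x^2=s\,x^1$ with $s\neq 1$. Then $e^{-K\tilde L t}x^2=e^{\lambda t}(x^2+s\,t\,x^1)$. One coefficient multiplies both $x^2$ and $t\,x^1$, so no choice of the $\alpha_k$ brings this to the form $\alpha(x^2+t\,x^1)+\alpha' x^1$. This does occur. In the third subcase of C5 one has $v_{a_x}=v_{a_y}=0$, hence $\kappa MB^\top x_0^2=\kappa(v_x^2+\iota v_y^2)\mathbf{1}_n$. This equals $x_0^1$ only if $v_x,v_y\in\{0,1\}$. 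For $v_x=v_y=2$ you get $\kappa MB^\top x_0^2=2x_0^1$, and the predicted drift is off by a factor $2$. The cure is to rescale the chain vectors, not the coefficients. Here $x_0^2=p^*$ works, since $\kappa MB^\top p^*=\kappa v^*=x_0^1$.

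The paper's three-line proof inherits exactly these defects by citing Proposition~\ref{prop: eigen_deg}. Your Step 2 already gives the case-independent result: $p(t)=c_1(t)\mathbf{1}_n+c_2(t)\operatorname{Re}(p^*)+c_3(t)\operatorname{Im}(p^*)$ with $c(t)=e^{\kappa A_{v^*}t}c(0)$. The six forms of \eqref{eq: p_par_t} follow from it via the Jordan type of $\kappa A_{v^*}$. This requires a Jordan basis whose chain relations hold with coefficient exactly $1$. In the degenerate subcases that basis must replace the Proposition's vectors, e.g.\ by taking the eigenvector of $G$ from a nonzero row of $G-(l/\kappa)I$.
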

\begin{proof}
    Since $\dot p = -K\tilde Lp$, from \eqref{eq: pt_exp_jordan} we have that $p(0) = \sum_{k=1}^n f_k$, where
    the fist three $f_k$ components correspond to the (potentially generalized) eigenvectors of $-K\tilde L$ that span $\mathcal{S}$. Thus, since $p(0) \in \mathcal{S}$, it can be expressed in the basis formed by these three eigenvectors, implying that all the rest of the contributions $f_k$ with $k \geq 4$ are zero. Therefore, $p(t)$ is given by \eqref{eq: p_par_t} for all $t \geq 0$. 
\end{proof}

\subsection{Stability analysis}\label{sec: stability}


In this subsection, we show that $p(t)$, as characterized by Theorem~\ref{thm: analytical_sol}, converges to the subspace $\mathcal{S}$, and that $\dot p(t)$, governed by \eqref{eq: L_mod_control}, converges to $\mathcal{M}$ as $t \rightarrow \infty$, provided that $h$ is sufficiently large and $p(0)\in\mathbb{C}^n$.

We begin by defining the complementary subspace of $\mathcal{S} = \operatorname{Ker}\{L\}$ as $\mathcal{C} := \operatorname{Img}\{L\}$. Let $P_\mathcal{S},P_{\mathcal{C}}\in\mathbb{C}^{n\times n}$ denote the projection matrices onto $\mathcal{S}$ and $\mathcal{C}$, respectively, satisfying $P_\mathcal{S} = 1 - P_\mathcal{C}$. Note that these two spaces are not orthogonal unless $L$ is Hermitian, as assumed in \cite{hgdemarina_affine}. We decompose $p$ as
\begin{equation}
	\label{eq: p_split}
	p = P_{\mathcal{S}} p + P_{\mathcal{C}} p = p_\mathcal{S} + p_\mathcal{C},
\end{equation}
which enables us to express the dynamics from \eqref{eq: L_mod_control} as
\begin{equation} \label{eq: dot_p_perp}
\dot p_\mathcal{C} = P_{\mathcal{C}} \dot p = -P_{\mathcal{C}}hKL(p_\mathcal{S} + p_\mathcal{C}) + P_{\mathcal{C}} \kappa MB^\top (p_\mathcal{S} + p_\mathcal{C}).
\end{equation}

Lemma \ref{lem: M} ensures that if $p(t_0) \in \mathcal{S}$, then $\mathcal{S}$ is invariant under the dynamics $\dot p(t) = MB^\top T_\Delta(p^*) \in \mathcal{S}, \forall t\geq t_0$. Consequently, $T_\Delta(p^*)\in\mathcal{S}$, which yields $P_\mathcal{C} MB^\top p_\mathcal{S} = 0$. Furthermore, Lemma \ref{lem: L_eigen} gives $KL p_\mathcal{S} = 0$. Applying these results, we can simplify \eqref{eq: dot_p_perp} as
\begin{align} 
	\dot p_\mathcal{C} &= -h P_{\mathcal{C}}KLp_\mathcal{C} + \kappa P_{\mathcal{C}} MB^\top p_\mathcal{C} \label{eq: dot_p_pert_simp}.
\end{align}

The following proposition provides a sufficient condition on the gain $h$ to guarantee the exponential stability of \eqref{eq: dot_p_pert_simp}.

\begin{prop} \label{prop: h}
	Let $L$ be designed as in Lemma \ref{lem: L_eigen}, $K$ so that $KL$ does not have eigenvalues with negative real component, and $M$ be designed as in Lemma \ref{lem: M}. Then, there exists a lower bound $h_l \in \mathbb{R}^+$ such that, for any $h > h_l$, the origin of system \eqref{eq: dot_p_pert_simp} is exponentially stable for all $p_\mathcal{C}(0) \in \mathcal{C}$. This is given by
	\begin{equation}\label{eq: hl}
        h_l = \kappa \, \|Q\|_2 \, \| MB^\top \|_2,
	\end{equation}
	where $Q$ is the unique positive definite matrix satisfying the Lyapunov equation $QJ_2 + J_2^H Q = 2I_{n-3}$, and $J_2 \in \mathbb{C}^{(n-3)\times (n-3)}$ denotes the Jordan form associated with the nonzero eigenvalues of $P_\mathcal{C}KL$.
\end{prop}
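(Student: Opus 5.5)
The plan is a Lyapunov argument on the invariant subspace $\mathcal{C}$: treat $-hP_\mathcal{C}KL$ restricted to $\mathcal{C}$ as the nominal, exponentially stable part and $\kappa P_\mathcal{C}MB^\top$ as a bounded perturbation that large $h$ dominates.

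\emph{Step 1 (invariance and nominal part).} Since $KL$ maps into $\operatorname{Img}\{KL\}$ and $P_\mathcal{C}$ projects onto $\mathcal{C}$, the right-hand side of \eqref{eq: dot_p_pert_simp} lies in $\mathcal{C}$. Hence $\mathcal{C}$ is invariant, and the system can be studied in coordinates of an $(n-3)$-dimensional basis of $\mathcal{C}$. By Lemma \ref{lem: L_eigen}, $\operatorname{Ker}\{KL\}=\mathcal{S}$ has dimension $3$. Thus $P_\mathcal{C}KL$ restricted to $\mathcal{C}$ has exactly the $n-3$ nonzero eigenvalues of $KL$. By hypothesis these have no negative real part, and being nonzero and (by the choice of $K$ in Section \ref{sec: static_shape}) in the open right half-plane, they have strictly positive real parts. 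I will write this restriction as $P_\mathcal{C}KL|_\mathcal{C}=TJ_2T^{-1}$, where $T$ is formed from generalized eigenvectors spanning $\mathcal{C}$. Since $-J_2$ is Hurwitz, the Lyapunov equation $QJ_2+J_2^HQ=2I_{n-3}$ has a unique positive definite solution $Q$.

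\emph{Step 2 (Lyapunov function and derivative).} I will set $\xi=T^{-1}p_\mathcal{C}$ and $V=\xi^HQ\xi$. Then
\[
\dot\xi=-hJ_2\xi+\kappa T^{-1}P_\mathcal{C}MB^\top T\xi .
\]
The nominal term contributes $-2h|\xi|^2$ to $\dot V$. The perturbation contributes at most $2\kappa\|Q\|_2\|T^{-1}P_\mathcal{C}MB^\top T\|_2|\xi|^2$. This gives
\[
\dot V\le -2\bigl(h-\kappa\|Q\|_2\|\tilde E\|_2\bigr)|\xi|^2,\qquad \tilde E=T^{-1}P_\mathcal{C}MB^\top T .
\]
Whenever $h$ exceeds $\kappa\|Q\|_2\|\tilde E\|_2$, we get $\dot V\le -cV$ with $c=2(h-h_l)/\lambda_{\max}(Q)$. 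This yields exponential decay of $\xi$, and hence of $p_\mathcal{C}=T\xi$.

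\emph{Step 3 (matching \eqref{eq: hl}) --- main obstacle.} The hard part is replacing $\|\tilde E\|_2$ by $\|MB^\top\|_2$, since $T$ is not unitary and $P_\mathcal{C}$ is an oblique projection, so neither has unit norm in general. I would first try to absorb the conditioning of $T$ and $\|P_\mathcal{C}\|$ into the definition of $Q$, i.e.\ interpret $\|Q\|_2$ as the norm in the original $p_\mathcal{C}$ coordinates, $T^{-H}QT^{-1}$, so that only $\|MB^\top\|_2$ remains. Failing that, I would state the bound with those condition-number factors and note that they equal $1$ when $T$ is unitary. Either way, existence of a finite $h_l$ is unaffected, which is the qualitative content. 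Finally, if $\kappa<0$, $\kappa$ should be read as $|\kappa|$ in \eqref{eq: hl}.
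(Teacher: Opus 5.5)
Your Steps 1--2 are essentially the paper's proof. The paper also passes to Jordan coordinates on $\mathcal{C}$: its $q_2$ and the $\dagger$-block of $TP_\mathcal{C}MB^\top T^{-1}$ play the roles of your $\xi$ and $\tilde E$. It takes $V=q_2^HQq_2$ with $QJ_2+J_2^HQ=2I_{n-3}$ and reaches $\dot V\le -2h\|q_2\|^2+2\kappa\|Q\|_2\|\tilde E\|_2\|q_2\|^2$. Its preliminary argument that the origin is the only equilibrium is subsumed by this estimate.

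The only divergence is your Step 3, and there your doubt is justified; it is not a defect of your proposal. The paper passes from $\|\tilde E\|_2$ to $\|MB^\top\|_2$ by invoking ``coordinate transformation norm preservation'' and ``projections do not increase norms''.

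\begin{itemize}
\item The second claim is actually fine here. With $\mathcal{C}=\operatorname{Img}\{L\}$ and $L$ real symmetric, $\mathcal{C}=\mathcal{S}^\perp$, so $P_\mathcal{C}$ is the orthogonal projection and $\|P_\mathcal{C}\|_2=1$. Your worry about an oblique projection is unnecessary.
\item The first claim holds only if the Jordan basis of $P_\mathcal{C}KL|_\mathcal{C}$ can be taken orthonormal, i.e., if that operator is normal, as for $K=kI$. For a general diagonal $K$ it is non-normal, and the Jordan-coordinate $Q$ alone cannot control the perturbation.
\end{itemize}

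As a bare linear-algebra check, $Y=\left[\begin{smallmatrix}1&-3\\0&2\end{smallmatrix}\right]$ has $J_2=\operatorname{diag}(1,2)$ and $\|Q\|_2=1$. Take $E=\left[\begin{smallmatrix}0&0\\1&0\end{smallmatrix}\right]$, so $\|E\|_2=1$, and $\kappa=1$. Then $-hY+E$ has determinant $h(2h-3)<0$ for $h\in(1,1.5)$. It is therefore unstable although $h>\kappa\|Q\|_2\|E\|_2$.

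So what you actually establish is existence of $h_l$, with $h_l=\kappa\|Q\|_2\|\tilde E\|_2\le\kappa\|Q\|_2\operatorname{cond}(T)\|MB^\top\|_2$. That is what this argument legitimately yields. The formula \eqref{eq: hl} as written is justified only in the normal case.

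Three refinements to Step 3 follow.

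\begin{itemize}
\item Absorbing the conditioning into $Q$ via $T^{-H}QT^{-1}$ does not help. That is the same Lyapunov function written in $p_\mathcal{C}$ coordinates, and its dissipation term is $-2h|T^{-1}p_\mathcal{C}|^2$. Converting this to $|p_\mathcal{C}|^2$ brings $\operatorname{cond}(T)$ straight back.
\item The clean fix that keeps the form of \eqref{eq: hl} is to avoid the Jordan basis altogether. Take an orthonormal basis $U$ of $\mathcal{C}$, so $P_\mathcal{C}=UU^H$ and $p_\mathcal{C}=U\xi$. Then $\dot\xi=-hY\xi+\kappa U^HMB^\top U\xi$ with $Y=U^HKLU$. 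Solve $Q_UY+Y^HQ_U=2I_{n-3}$ and set $V=\xi^HQ_U\xi$. This gives $\dot V\le-2(h-\kappa\|Q_U\|_2\|MB^\top\|_2)|\xi|^2$, and $|p_\mathcal{C}|=|\xi|$. Hence $h_l=\kappa\|Q_U\|_2\|MB^\top\|_2$ is rigorous, and it reduces to \eqref{eq: hl} when $Y$ is normal.
\item Like the paper, you tacitly need the zero eigenvalue of $KL$ to be semisimple, so that $Y$ is invertible and its spectrum is exactly the nonzero spectrum of $KL$. This is automatic for positive diagonal $K$, since then $KL$ is similar to the symmetric matrix $K^{1/2}LK^{1/2}$.
\end{itemize}
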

\begin{proof}
Firstly, we show that the origin of \eqref{eq: dot_p_pert_simp} is the only stable equilibrium 
. Differently from \cite{hgdemarina_affine}, we do not have that $KLp_\mathcal{C} \in \mathcal{C}$, so $P_{\mathcal{C}} KLp_\mathcal{C} \neq KLp_\mathcal{C}$ in general, e.g., $K$ is not proportional to the identity matrix. Nonetheless, it is straightforward to see that $-hP_{\mathcal{C}}KL$ is still a stable matrix for $p_\mathcal{C}$, i.e., for $\dot p_\mathcal{C}(t) = -h P_{\mathcal{C}}KLp_\mathcal{C}(t)$ we have that $p_\mathcal{C}(t) \to 0$ exponentially fast as $t\to\infty$ assuming $p_\mathcal{C}(0) \in \mathcal{C}$. 
This follows from the fact that $-KL$ has no eigenvalue with positive real part and its kernel is $\mathcal{S}$, while $p_\mathcal{C}(t)$ lives in the complementary subspace $\mathcal{C}$ for all time, with $P_{\mathcal{C}}KLp_\mathcal{C} \in \mathcal{C}$. The second term in \eqref{eq: dot_p_pert_simp} can be seen as a perturbation. For a sufficiently large ratio $h/\kappa$, the nonzero eigenvalues of $-K\tilde L = (-h KL + \kappa MB^\top)$ are close to the eigenvalues of $-hKL$ and far from becoming zero. In such a case, according to Theorem \ref{thm: analytical_sol}, the kernel of $(-h KL + \kappa MB^\top)$ is equal to or within $\mathcal{S}$; thus, the origin is the only stable equilibrium of \eqref{eq: dot_p_pert_simp} when $p_\mathcal{C}(0) \in \mathcal{C}$. That is, the equality $P_\mathcal{C}(-hKL + \kappa MB^\top)p_\mathcal{C} = 0$ holds if and only if $p_\mathcal{C} \in \mathcal{S}$, i.e., $p_\mathcal{C} = 0$.

Secondly, we find the lower bound for $h$ following \cite[Theorem 1]{hgdemarina_affine}. Choose a coordinate transformation $T\in\mathbb{C}^{n\times n}$ so that the complementary subspaces $\mathcal{S}$ and $\mathcal{C}$ are orthogonal. 
In this coordinate system, the system \eqref{eq: dot_p_pert_simp} can be split into two independent subsystems whose signals live only in $\mathcal{S}$ and $\mathcal{C}$, respectively. In fact, in the new coordinate system the initial condition can have the form $Tp_\mathcal{C}(0) = q(0) = \begin{bmatrix}0 & q_2(0)^\top \end{bmatrix}^\top \in\mathbb{C}^n, q_2\in\mathbb{C}^{n-3}$, and $P_\mathcal{C}$ in \eqref{eq: dot_p_pert_simp} annihilates any possible component \emph{escaping} from $\mathcal{C}$; thus, the three first components of $q(t)$ are always zero. Consequently, the Jordan form $J = T P_\mathcal{C}KL T^{-1} = \left[\begin{smallmatrix}J_1 & 0 \\ 0 & J_2\end{smallmatrix}\right]\in\mathbb{C}^{n\times n}$, where $J_1\in\mathbb{R}^{3\times 3}$ is the zero matrix and $J_2\in\mathbb{R}^{(n-3) \times (n-3)}$ is Hurwitz. The dynamics of $q_2(t)$ can be written as follows
\begin{equation} \label{eq: aux105}
\dot q_2 = -hJ_2 q_2 + \kappa \left(TP_\mathcal{C}MB^\top T^{-1}\right)^\dagger q_2,
\end{equation}
where the symbol $\dagger$ means that we take the last $(n-3) \times (n-3)$ diagonal block of the matrix to accommodate for the dimensions of $q_2$, e.g., $J^\dagger = J_2$. 

Considering the Lyapunov function $V = q_2^H Q q_2$ for \eqref{eq: aux105}, where $Q$ is a positive definite matrix satisfying $QJ_2 + J_2^H Q = 2I_{(n-3)}$, the time derivative satisfies
\begin{align*}
	\frac{\mathrm{d}V}{\mathrm{dt}} &\leq -2h\|q_2\|^2 +2\kappa \, \|Q\left(TP_{\mathcal{C}} MB^\top T^{-1}\right)^\dagger \|_2 \, \|q_2\|^2 \nonumber \\
	&\leq -2h\|q_2\|^2 +2\kappa \, \|Q\|_2 \, \|TP_{\mathcal{C}} MB^\top T^{-1} \|_2 \, \|q_2\|^2 \nonumber \\
	&\leq -2h\|q_2\|^2 +2\kappa \, \|Q\|_2 \, \|MB^\top\|_2 \, \|q_2\|^2,
\end{align*}
using $\|A\|_2^\dagger \leq \|A\|_2$, coordinate transformation norm preservation, and the fact that projection matrices do not increase the norm of a vector. Therefore, choosing $h > \kappa \|Q\|_2 \| MB^\top \|_2$ ensures $q_2(t) \to 0$ as $t \to \infty$, thus $p_\mathcal{C}(t) \to 0$ exponentially fast when $p_\mathcal{C}(0)\in\mathcal{C}$.
\end{proof}

Having established the exponential stability of $p_\mathcal{C}(t)$ to zero, i.e., $p(t) \rightarrow \mathcal{S}$, as $t\rightarrow\infty$ under the conditions of Proposition \ref{prop: h}, we now present the main convergence result.

\begin{theorem}
	Consider the system \eqref{eq: L_mod_control} with $p(0) \in \mathbb{C}^n$, and choose $h$ as in Proposition \ref{prop: h}; then $p(t) \rightarrow \mathcal{S}$ and $\dot p(t) \rightarrow \mathcal{M}$ as characterized in Theorem \ref{thm: analytical_sol} as $t\to\infty$.
\end{theorem}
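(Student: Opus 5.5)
The plan is to combine the decomposition \eqref{eq: p_split} with Proposition \ref{prop: h} and Theorem \ref{thm: analytical_sol}. Write $p(t) = p_\mathcal{S}(t) + p_\mathcal{C}(t)$. By Proposition \ref{prop: h}, for $h > h_l$ the component $p_\mathcal{C}$ obeys the autonomous linear system \eqref{eq: dot_p_pert_simp}, which is decoupled from $p_\mathcal{S}$ because $KLp_\mathcal{S}=0$ and $P_\mathcal{C}MB^\top p_\mathcal{S}=0$. Hence $\|p_\mathcal{C}(t)\| \le c\, e^{-\beta t}\|p_\mathcal{C}(0)\|$ for some $c,\beta>0$. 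Since $\mathcal{S}$ is a closed subspace, this already gives $\operatorname{dist}(p(t),\mathcal{S}) \le \|p_\mathcal{C}(t)\| \to 0$, which is the first claim.

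For the second claim, I would project \eqref{eq: L_mod_control} onto $\mathcal{S}$ to obtain
\[
\dot p_\mathcal{S} = \kappa MB^\top p_\mathcal{S} + P_\mathcal{S}\left(-hKL + \kappa MB^\top\right)p_\mathcal{C} .
\]
Here I use $P_\mathcal{S}MB^\top p_\mathcal{S} = MB^\top p_\mathcal{S}$, which holds by \eqref{eq: MB_p_hat}. The velocity then splits as
\[
\dot p = \kappa MB^\top p_\mathcal{S} + \left(-hKL+\kappa MB^\top\right)p_\mathcal{C}.
\]
The second term is bounded by $(h\|KL\|_2+\kappa\|MB^\top\|_2)\,c\,e^{-\beta t}\|p_\mathcal{C}(0)\|$, so it vanishes exponentially. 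For the first term, \eqref{eq: MB_p_hat} together with Lemma \ref{lem: addition} shows that $\kappa MB^\top p_\mathcal{S}(t)\in\mathcal{M}$ for every $t$, because $p_\mathcal{S}(t)\in\mathcal{S}$ is of the form $T_{\Delta'}(p^*)$. It follows that $\operatorname{dist}(\dot p(t),\mathcal{M})\to 0$.

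It remains to argue that the limiting motion is "as characterized in Theorem \ref{thm: analytical_sol}". The $p_\mathcal{S}$ equation is the restriction of $-K\tilde L$ to its invariant subspace $\mathcal{S}$, driven by an exponentially decaying input $g(t)$. Its solution is therefore
\[
p_\mathcal{S}(t) = e^{At}p_\mathcal{S}(0) + \int_0^t e^{A(t-s)}g(s)\,ds ,
\]
where $A$ is the restriction, with the eigenstructure given in Proposition \ref{prop: eigen_deg}.

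The main obstacle is the convolution term. The modes on $\mathcal{S}$ may grow: $e^{l_\pm t}$ with positive real part, or polynomially as $t$, $t^2$. One therefore cannot say $p(t)$ converges to a single trajectory in the usual absolute sense. My approach is to choose $h$ large enough that the decay rate $\beta$ of $p_\mathcal{C}$ exceeds $\max\operatorname{Re}(l_\pm)$ and $0$. Then the convolution equals $e^{At}\int_0^\infty e^{-As}g(s)\,ds$ minus a tail that is $o(\|e^{At}\|)$. This gives $p_\mathcal{S}(t)=e^{At}\hat p_0 + (\text{decaying relative error})$, where $\hat p_0\in\mathcal{S}$ is an effective initial condition. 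Consequently $p(t)$ asymptotically follows the case C1)--C6) form \eqref{eq: p_par_t}, with constants $\alpha_{1,2,3}$ determined by $\hat p_0$ rather than by $p(0)$.

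If the bound on $\beta$ relative to $\operatorname{Re}(l_\pm)$ cannot be guaranteed from $h_l$ alone, I would state convergence in the set sense established above, $p\to\mathcal{S}$ and $\dot p\to\mathcal{M}$, and add the remark that a slightly larger $h$ yields the asymptotic-trajectory statement.
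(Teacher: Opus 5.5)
Your proof of the two claims actually stated, $p(t)\to\mathcal{S}$ and $\dot p(t)\to\mathcal{M}$, follows the paper's route. You split $p=p_\mathcal{S}+p_\mathcal{C}$ and use Proposition~\ref{prop: h} to get $p_\mathcal{C}(t)\to 0$ exponentially. The paper only asserts that $\dot p\to\mathcal{M}$ ``as $p(t)\to\mathcal{S}$''. Your identity $\dot p=\kappa MB^\top p_\mathcal{S}+(-hKL+\kappa MB^\top)p_\mathcal{C}$ supplies the missing justification: the first term lies in $\mathcal{M}$ by \eqref{eq: MB_p_hat}, and the second is bounded by a constant times $\|p_\mathcal{C}(t)\|$. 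This remains valid even when the modes on $\mathcal{S}$ grow.

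Your additional step, showing that $p(t)$ asymptotically follows a trajectory of the form \eqref{eq: p_par_t}, goes beyond what the paper proves, but it uses the wrong condition. Writing the convolution as $e^{At}\int_0^\infty e^{-As}g(s)\,ds$ minus a tail requires that integral to converge. However, $\|e^{-As}\|$ is of order $e^{-\mu s}$ times a polynomial, with $\mu:=\min\{0,\operatorname{Re}(l_+),\operatorname{Re}(l_-)\}$. So the obstruction comes from eigenvalues with $\operatorname{Re}(l_\pm)<-\beta$, not from growing modes.

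Your hypothesis $\beta>\max(\operatorname{Re}(l_\pm),0)$ therefore misses in both directions. It is superfluous for the neutral and growing modes, where any $\beta>0$ works. It also does not exclude the problematic case, e.g.\ a fast contraction with $\operatorname{Re}(l_\pm)<-\beta$, where your $\hat p_0$ is undefined.

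No extra condition on $h$ is needed. Work on each generalized eigenspace of $A$ separately:
\begin{itemize}
\item If $\operatorname{Re}\lambda>-\beta$, define that component of $\hat p_0$ through the now convergent integral. Substituting $u=s-t$ shows the tail is $O(e^{-\beta t})$.
\item If $\operatorname{Re}\lambda\le-\beta$, both the true component and $e^{At}$ applied to any initial value tend to zero.
\end{itemize}
Hence $\|p(t)-e^{At}\hat p_0\|\to 0$ in absolute terms for every $h>h_l$. This is stronger than your relative $o(\|e^{At}\|)$ estimate, since $\|e^{At}\|\ge 1$ because $0$ is always an eigenvalue. It also makes your fallback about enlarging $h$ unnecessary.
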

\begin{proof}
	Consider splitting $p(t)$ as in (\ref{eq: p_split}). Because of Proposition \ref{prop: h}, we know that if $h > h_l$, then $p_\mathcal{C}(t) \to 0$; thus, $p(t) \rightarrow p_\mathcal{S}(t) \in \mathcal{S}$ as $t \rightarrow \infty$. Consequently, we know that $\dot p(t) \rightarrow \mathcal{M}$, characterized in Theorem \ref{thm: analytical_sol}, as $p(t) \rightarrow \mathcal{S}$.
\end{proof}
\begin{coroll}[Input bound and saturation design] \label{cor: input}
Under the conditions of Theorem~\ref{thm: analytical_sol} with $h>h_l$, the input satisfies
\begin{equation*}
\|u(t)\|_2 \le \underbrace{h\,\|KLp_\mathcal{C}(t)\|_2}_{\text{transient}} + \underbrace{\kappa\,\|MB^\top p(t)\|_2}_{\text{motion}}.
\end{equation*}
The transient term decays exponentially by Proposition~\ref{prop: h} from its measured initial value $\|KLp(0)\|_2$, while the motion term is the commanded velocity $\kappa\|v(t)\|_2$, available in closed form from $\Delta_v$ and the analytical solution \eqref{eq: p_par_t}. For non-expanding motions ($\operatorname{Re}(l_\pm)\le 0$ in Proposition~\ref{prop: eigen_deg}, outside the resonant case C6), the motion term is uniformly bounded by $\nu := \sup_{t\ge 0}\|MB^\top p(t)\|_2$, computable from Theorem~\ref{thm: analytical_sol}; otherwise it grows and the bound holds over a finite horizon only. Consequently, taking $h$ just above $h_l$, any
\begin{equation*}
\kappa \le \frac{u_{\max}}{c_T\,\|Q\|_2\|MB^\top\|_2\,\|KLp(0)\|_2 + \nu},
\end{equation*}
with $c_T\ge 1$ bounding the transient overshoot of $\|KLp(t)\|_2$, guarantees $\|u(t)\|_2 \le u_{\max}$ for all $t\ge 0$. For higher-order platforms, $u(p)$ serves as the reference for the low-level controller enforcing actuator limits.
\end{coroll}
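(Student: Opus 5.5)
The corollary follows directly from the closed-loop expression \eqref{eq: L_mod_control} and the triangle inequality. Since $u=-K\tilde L p=-hKLp+\kappa MB^\top p$, we get $\|u\|_2\le h\|KLp\|_2+\kappa\|MB^\top p\|_2$. We then split $p=p_\mathcal{S}+p_\mathcal{C}$ as in \eqref{eq: p_split}. Lemma \ref{lem: L_eigen} gives $KLp_\mathcal{S}=0$, hence $KLp=KLp_\mathcal{C}$, which is the claimed transient/motion decomposition.

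\textbf{Transient term.} Proposition \ref{prop: h} gives exponential decay of $q_2$ through the Lyapunov function $V=q_2^HQq_2$. From this we obtain $\|p_\mathcal{C}(t)\|\le c\,e^{-\beta t}\|p_\mathcal{C}(0)\|$, with the constant $c$ absorbing the condition numbers of $Q$ and $T$. Applying the bounded operator $KL$ shows that $\|KLp_\mathcal{C}(t)\|_2$ decays exponentially. We define $c_T\ge1$ as the overshoot constant, so that $\|KLp(t)\|_2\le c_T\|KLp(0)\|_2$, noting again that $KLp(0)=KLp_\mathcal{C}(0)$.

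\textbf{Motion term.} After the transient, $p(t)$ approaches the trajectory of Theorem \ref{thm: analytical_sol}. On $\mathcal{S}$, \eqref{eq: MB_phat_ker} shows that $MB^\top p$ is the velocity expressed in the eigenbasis. In cases C1–C5 with $\operatorname{Re}(l_\pm)\le0$, the modes $e^{l t}$ are bounded. The only polynomial growth there is the term $x_0^1 t$, and $MB^\top$ maps it to a constant, since $MB^\top\mathbf{1}_n=0$ and the chain structure lowers the degree by one. In the defective case C2 with $\operatorname{Re} l=0$ one needs more care; I would argue that $l\neq0$ there forces $\operatorname{Re}$ to be examined directly. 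Because $MB^\top$ kills the highest-order term in $t$, the motion term is bounded in every case except C6, where a residual $t$ term survives. We also need to handle the contribution of $p_\mathcal{C}$: the term $\|MB^\top p_\mathcal{C}\|$ is bounded by $\|MB^\top\|\,\|p_\mathcal{C}\|$, which decays. Together these give $\nu<\infty$; otherwise the supremum is taken over a finite horizon only.

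\textbf{Gain condition.} With $h$ just above $h_l=\kappa\|Q\|_2\|MB^\top\|_2$, we get
$$\|u\|_2\le \kappa\bigl(c_T\|Q\|_2\|MB^\top\|_2\|KLp(0)\|_2+\nu\bigr),$$
which produces the stated condition on $\kappa$. The \textbf{main obstacle} is that $\nu$ itself depends on $\kappa$ through the eigenvalues $l_\pm=\kappa(\cdot)$. It is the supremum that is scale-invariant in $t$, so I would argue that $\nu$ computed with $\kappa$ factored out is independent of $\kappa$, since the modes are only time-rescaled. The $p_\mathcal{C}$ contribution to the motion term is absorbed into $\nu$ as well.
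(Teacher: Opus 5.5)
Your route is the one the paper takes; its justification is embedded in the corollary itself. You apply the triangle inequality to $u=-hKLp+\kappa MB^\top p$, use $KLp_\mathcal{S}=0$, introduce an overshoot constant $c_T$, bound the commanded velocity outside C6 and expanding modes, and finally substitute $h\approx h_l=\kappa\|Q\|_2\|MB^\top\|_2$.

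The step that fails as written is the motion-term bound in C2. Your blanket reason, that $MB^\top$ ``kills the highest-order term in $t$'', holds only for chains at the zero eigenvalue. In C4--C5 the top vector $x_0^1$ is itself an eigenvector for $0$, so $\kappa MB^\top(x_0^2+x_0^1t)=x_0^1$. In C5 this, and not $MB^\top\mathbf{1}_n=0$, is the reason, since $x_0^1\notin\operatorname{span}\{\mathbf{1}_n\}$ in general. In C2, however, $\kappa MB^\top x_l^1=l\,x_l^1\neq0$, so the velocity retains the term $l\,\alpha_3 x_l^1\,te^{lt}$. The sub-case $\operatorname{Re}(l)=0$ that you left open is exactly where this term would be unbounded.

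The missing observation is that $l_+=l_-$ forces $\left(\frac{v_{a_x}-v_{a_y}}{2}\right)^2+v_{h_x}v_{h_y}=0$. Hence $l=\kappa\frac{v_{a_x}+v_{a_y}}{2}$ is real and, in C2, nonzero, so $\operatorname{Re}(l)\le0$ means $l<0$ and $te^{lt}$ decays. Likewise the nonzero eigenvalue in C3--C4 is the real number $\kappa(v_{a_x}+v_{a_y})$. Only in C1 can $l_\pm$ be purely imaginary, and then they are distinct, hence semisimple. The uniform statement you need is $\sup_{\tau\ge0}\|\kappa A_{v^*}e^{\kappa A_{v^*}\tau}\|<\infty$, where $A_{v^*}$ from \eqref{eq: eig_constraints} represents $MB^\top$ on $\mathcal{S}$ in the basis $\mathcal{B}_\mathcal{S}$. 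This bound holds exactly for non-expanding $v^*$ outside C6.

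That uniform bound is also what your extension to $p(0)\notin\mathcal{S}$ needs. The transient term presupposes this extension, even though under the literal hypothesis $p(0)\in\mathcal{S}$ of Theorem~\ref{thm: analytical_sol} it vanishes identically. Saying that $p(t)$ approaches the trajectory of Theorem~\ref{thm: analytical_sol} and that $\|MB^\top p_\mathcal{C}\|_2$ decays is not enough, because $p_\mathcal{S}$ is forced:
\[
\dot p_\mathcal{S}=\kappa MB^\top p_\mathcal{S}+P_\mathcal{S}(-hKL+\kappa MB^\top)p_\mathcal{C}.
\]
The forcing is generally nonzero, at least through $P_\mathcal{S}MB^\top p_\mathcal{C}$. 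Variation of constants, the bound on $\kappa A_{v^*}e^{\kappa A_{v^*}\tau}$, and the exponential decay of $p_\mathcal{C}$ together give $\nu<\infty$. Note that this $\nu$ is then not the closed-form supremum obtained from \eqref{eq: p_par_t} at $p_\mathcal{S}(0)$.

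Your time-rescaling resolution of the $\kappa$-dependence of $\nu$ is correct for $\kappa>0$. For $p(0)\notin\mathcal{S}$ it remains valid provided $h/\kappa$ is held fixed, as the prescription ``just above $h_l$'' does, and this also makes $c_T$ independent of $\kappa$. The paper leaves this implicit, so it is a worthwhile addition.
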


\section{Simulations} \label{sec: sims} 
To summarize our methodology, we implement Algorithm \ref{alg} for the numerical simulations presented in this section.

\begin{algorthm} \label{alg}
$ $

\begin{enumerate}
	\item Given a desired $p^*$ as in Lemma \ref{lem: L_eigen}, calculate the weights $\omega_{ij}$ of $L$ according to \cite[Proposition 2]{zhao2018affine}, that also requires the framework to be generically and universally rigid, so that they satisfy \eqref{eq: w_condition}.
	\item In case of considering an arbitrary framework for the design of the weights, calculate $K$ according to \cite[Theorem 3.3, Algorithm 3.1]{lin2013K}.
	\item Calculate the motion parameters $\mu_{ij}$ for $M_{v_x}$, $M_{v_y}$, $M_{v_{a_x}}$, $M_{v_{a_y}}$, $M_{v_{h_x}}$ and $M_{v_{h_y}}$ in \eqref{eq: M_split}, so that $M_{{\{v_x,v_y\}}} B^\top p^*$ corresponds to a horizontal/vertical translation of 1 \textit{distance unit/s}, $M_{\{v_{a_x}, v_{a_y}\}} B^\top p^*$ to a scaling of 1 \textit{current size/s}, and $M_{{\{v_{h_x}, v_{h_y}\}}} B^\top p^*$ to a shearing of 1 \textit{distance unit/s}. This process might vary depending on the chosen basis of collective motions.
	\item We finish the design of $M$ in \eqref{eq: M_split} by choosing the vector of affine coordinates $\Delta_v$ that determines the reference collective motion $v^*$ given by \eqref{eq: v_star}. Then, we choose $\kappa$ to modulate their global speed. Note that $\kappa = 1$ preserves the units from the step before.
	\item Finally, we design $h$ for the modified weights in \eqref{eq: wmod} according to Proposition \ref{prop: h}.
	\item Optionally, if the system starts at time $t_0$ with the configuration $p(t_0) \in \mathcal{S}$, we can calculate the $\alpha_{\{1,2,3\}}$ in \eqref{eq: p_par_t} that characterizes $p(t)$ for all $t \geq t_0$, i.e., there is no need of numerical integration since we know the exact analytic solution.
\end{enumerate}

\end{algorthm}

\begin{figure}[]
    \centering
    \includegraphics[trim={0cm 0cm 0cm 0cm}, clip, width=0.98\columnwidth]{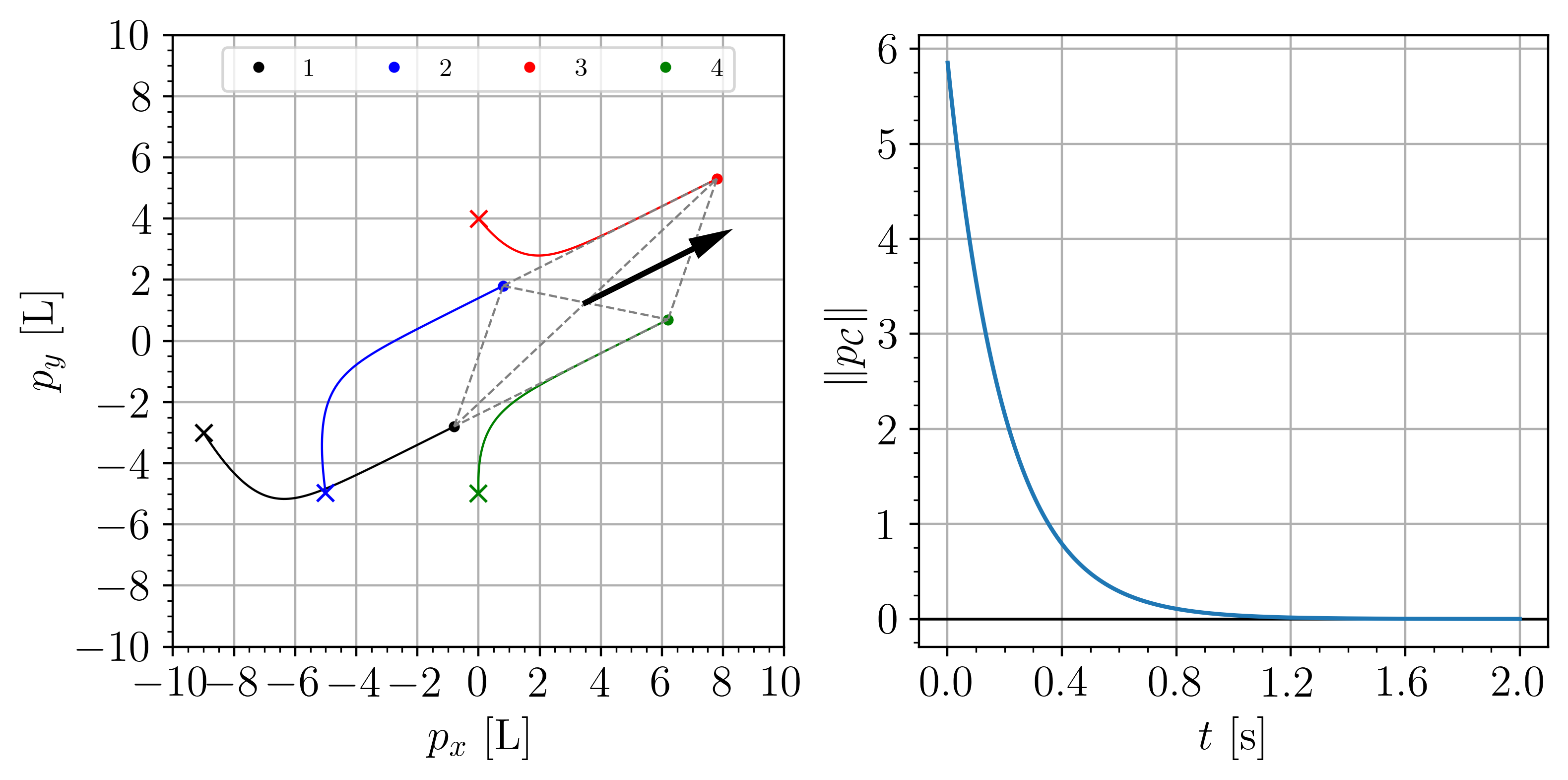}
	\caption{A formation of four agents converges to the desired shape $\mathcal{S}$, given by the reference shape $p^*_{\text{sim}}$, as their velocity converge to the desired collective motion $\mathcal{M}$ given by $v^* = \mathbf{1}_n$, denoted by the big black arrow. In the left plot, the solid lines represent the evolution of the agents' position, the crosses indicate their initial positions, and the grey dashed lines the interaction graph. The right plot represents the exponential convergence of $\|p_\mathcal{C}(t)\|$ to zero.}
    \label{fig: sim1}
\end{figure}

\begin{figure}[h!]
    \centering
    \includegraphics[trim={0cm 0cm 0cm 0cm}, clip, width=0.97\columnwidth]{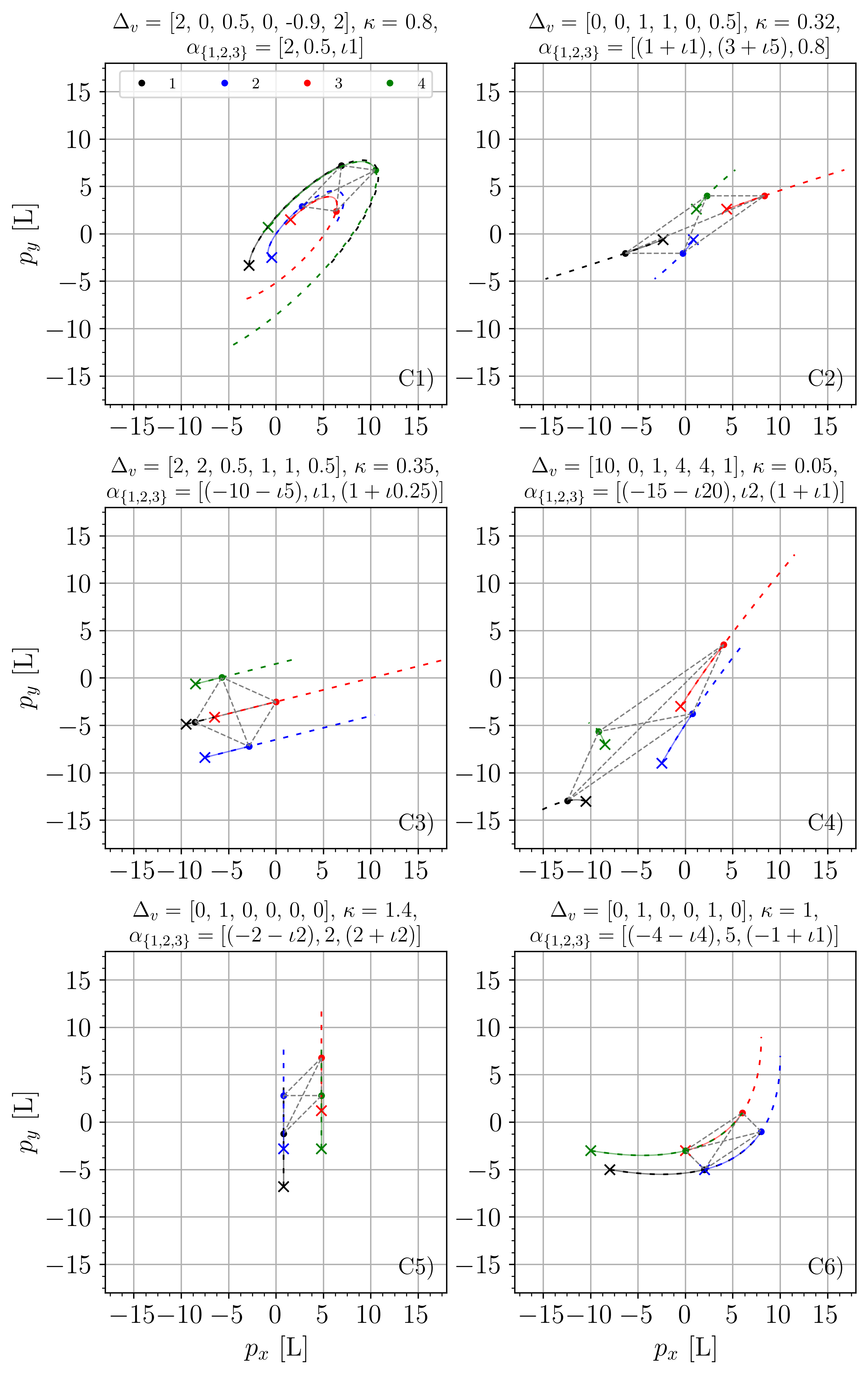}
	\caption{A formation of four agents preserves the desired shape $\mathcal{S}$, given by the reference shape $p^*_{\text{sim}}$, while moving with the desired collective motion $\mathcal{M}$ given by $v^* = T_{\Delta_v}(p^*_{\text{sim}})$. Each subplot corresponds to one of the six conditions in Theorem \ref{thm: analytical_sol}. Solid lines depict the Euler integration of the dynamics until $t_f = 3$ s, colored dashed lines represent the analytical solution up to $2 t_f$, crosses mark the initial positions, and grey dashed lines illustrate the interaction graph. We observe how the analytical solution coincides with the Euler numerical integration of the differential equation \eqref{eq: dyn} with $u = K\tilde L p$.}
    \label{fig: sim2}
\end{figure}

\begin{figure}[h!]
    \centering
    \includegraphics[trim={0cm 0cm 0cm 0cm}, clip, width=1\columnwidth]{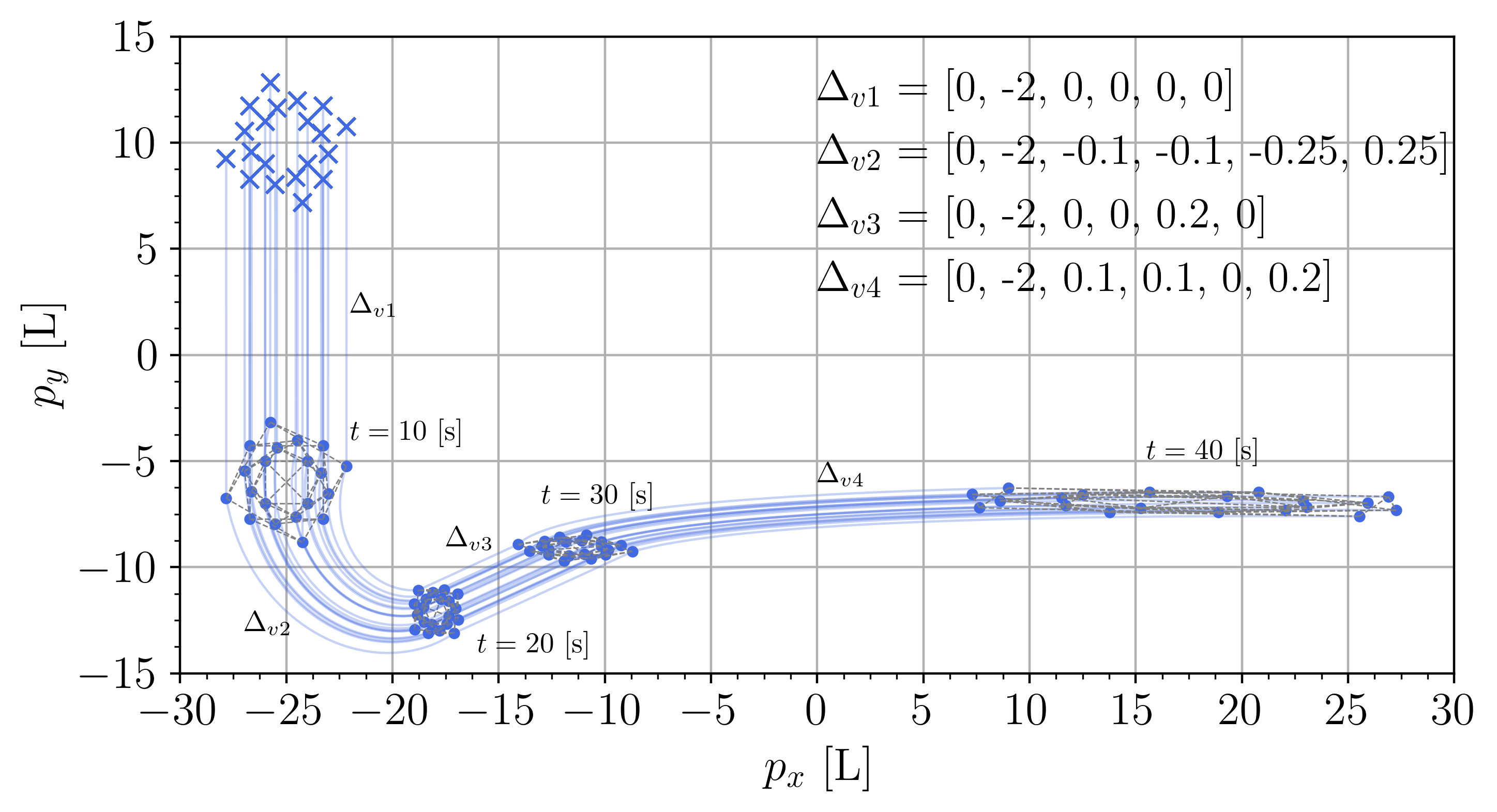}
    \caption{A formation of 20 agents preserves the desired shape $\mathcal{S}$, given by the reference shape $p^*_{\text{sim2}}$, as their velocity converge to the desired collective motion $\mathcal{M}$ given by $v^* = T_{\Delta_k}(p^*_{\text{sim2}})$, where $\Delta_k$ changes every 10 seconds. The solid lines represent the evolution of the agents' position, the crosses indicate their initial positions, given by $p(0) = p^*_{\text{sim2}} + (-25 + \iota 10)$, and the grey dashed lines the interaction graph.}
    \label{fig: sim3}
\end{figure}

In the following first two simulations, we consider a framework of four agents with a complete interaction graph. In both cases, the desired shape is given by the reference shape $p^*_{\text{sim}} = [-1-\iota,\; -1+\iota,\; 1+\iota,\; 1-\iota]^\top$, corresponding to the one illustrated in Figure \ref{fig: affine_v}. For this reference shape, we have numerically tested that $KL$, with $K = I_n$ and $
\scalemath{0.9}{
    L = \frac{1}{4}
    \left[\begin{smallmatrix}
        1 & -1 &  1 & -1 \\
       -1 &  1 & -1 &  1 \\
        1 & -1 &  1 & -1\\
       -1 &  1 & -1 &  1
    \end{smallmatrix}\right]
}
$, does not have eigenvalues with negative real component.
The six component matrices $M_{\{\cdot\}}B^\top$ for $p^*_{\text{sim}}$ and its interaction graph, along with the code for the three simulations below, are available in \cite{github}.

\subsection{Convergence to the desired shape and collective motion}

On the simulation depicted in Figure \ref{fig: sim1}, the team of four agents starts from an initial configuration that is not at the desired shape, i.e., $p(0) \notin \mathcal{S}$. The desired collective motion is given by $v^* = \mathbf{1}_n$, with $\kappa = 1$. To determine the lower bound of $h$ in Proposition \ref{prop: h} for this setup, we first compute $J_2 = 1$, which results in $Q = 1$. Next, we calculate $\|MB^\top\|_2 = 1$, and since $\kappa = 1$, \eqref{eq: hl} yields $h_l = 1$. Hence, we choose $h=5$ in order to satisfy the condition in Proposition \ref{prop: h}. This allows us to numerically demonstrate that $\|p_\mathcal{C}(t)\| \rightarrow 0$ exponentially fast as $t \rightarrow \infty$, and that $v(t) \to \mathcal{M}$ corresponding to the desired translational motion.

\subsection{Prediction of the stationary collective motions}

On the six simulations illustrated in Figure \ref{fig: sim2}, the team of agents starts from different initial configurations within $\mathcal{S}$. The desired reference collective motion $v^* = T_\Delta(p^*_{\text{sim}})$ and the parameter $\kappa$, together with the corresponding constants $\alpha_{\{1,2,3\}}$, are indicated in Figure \ref{fig: sim2} for each case. The Euler integration of the dynamics in \eqref{eq: dyn}, with $u = K\tilde L p$, match the analytical solution provided in Theorem \ref{thm: analytical_sol}. Note that in these simulations $hKLp(t) = 0$ for all $t\geq 0$ because $p(0) \in \mathcal{S}$ and $\mathcal{S}$ in invariant, so the design of $h$ is not a concern in this case.

\subsection{Maneuvering a large team of robots}

On the simulation illustrated in Figure \ref{fig: sim3}, we have a team of $n = 20$ agents with an interaction five-layer graph where each layer is a complete-graph square and consecutive layers are interconnected (full list in \cite{github}).
The desired shape is given by the reference shape 
\begin{align*}
    \scalemath{0.85}{
    p^*_{\text{sim2}} = [p^{*\top}_{\text{sim}},\; 
    p^{*\top}_{\text{sim}} \cdot (1.2 e^{\iota \frac{\pi}{3}}),\; 
    p^{*\top}_{\text{sim}} \cdot (1.2 e^{\iota \frac{\pi}{3}})^2, \; 
    \cdots, \; 
    p^{*\top}_{\text{sim}} \cdot (1.2 e^{\iota \frac{\pi}{3}})^4]^\top
    },
\end{align*}
for which we have numerically tested that $KL$, with $K = I_n$ and the computed $L$ \cite{github}, does not have eigenvalues with negative real component. The matrix $M$ was computed using a numerical least-squares approach to determine the motion parameters $\mu_{ij}$ that satisfies \eqref{eq: v_i_body} for each robot $i$ \cite{github}. The desired reference collective motion is given by $v^* = T_{\Delta_k}(p^*_{\text{sim2}})$, where $\Delta_k$ changes every 10 seconds as indicated in Figure \ref{fig: sim3}, and $\kappa = 0.8$. The design of $h$ is not a concern in this case because $p(0) \in \mathcal{S}$.


\section{Conclusions} \label{sec: conclusions}

This paper presents how to maneuver distributively and without leaders an \emph{affine formation}. In particular, we present a methodology to find the analytical solution of the single-integrators when the formation starts from a desired affine formation. To achieve this, we have encoded prior results from affine formation control in 2D \cite{lin2016necessary} into the complex domain, bridging in this way with the work in \cite{lin2016complexlap,hgdemarina_complex}. We then formally showed how to modify the original real Laplacian matrix to achieve a desired collective motion. Next, we have proved the convergence to both the desired shape and the desired collective motion, and calculate the eigenvalues and eigenvectors of the modified real Laplacian matrix that describe such a stationary collective motion. Finally, we have validated our results numerically by conducting three numerical simulations: the first one demonstrates the convergence to the desired affine formation simultaneously with the desired motion, the second simulation compares the analytical solution with an Euler integration of the dynamical system, and the third simulation showcases an application scenario involving a large number of robots.

Future research will focus on leveraging our analytical solution to study the impact of various imperfections, such as different perceptions among neighbors and communication failures. Additionally, exploring equivalence principles between hardware and software parameters can lead to adaptive control laws to mitigate potentially undesired stationary collective motions due to hardware imperfections.


\begin{appendix}

\subsection{Proof of Proposition \ref{prop: eigen_deg}} \label{ap: prop_eig}

The eigenvalue equation of $-K\tilde L$ for the eigenvectors $x_\lambda$ within $\mathcal{S}$ is given by \eqref{eq: eig_eq}, and can be reformulated using \eqref{eq: MB_phat_ker}, yielding the form shown in \eqref{eq: eig_constraints}. Thus, if $\lambda$ is an eigenvalue of $-K\tilde L$ associated with an eigenvector $x_\lambda \in \mathcal{S}$, the system of equations
\begin{align} \label{eq: prop1_system}
    \underbrace{
    \left[\begin{matrix}
        -\frac{\lambda}{\kappa} & v_x & v_y \\
        0 & v_{a_x} - \frac{\lambda}{\kappa} & v_{h_y} \\
        0 & v_{h_x} & v_{a_y} - \frac{\lambda}{\kappa} \\
    \end{matrix}\right]
    }_{(A_{v^*} - I_3\frac{\lambda}{\kappa})}
    \underbrace{
    \left[\begin{matrix}
        c_1 \\ c_2 \\ c_3
    \end{matrix}\right]}_{[x_\lambda]_{\mathcal{B}_\mathcal{S}}} = 0,
\end{align}
must hold. Therefore, calculating these eigenvalues of $-K\tilde L$ is equivalent to finding the roots of $\operatorname{det}(A_{v^*} - I_3\frac{\lambda}{\kappa}) = 0$, which yields the eigenvalues $\{0, l_+, l_-\}$.

On the other hand, the eigenspace $\mathcal{U}_\lambda$ of each (potentially degenerated) eigenvalue $\lambda$ calculated previously is given by the set of vectors $x_\lambda \in \mathcal{S}$ which satisfy \eqref{eq: prop1_system}, i.e., $\mathcal{U}_\lambda := \{x_\lambda \in \mathcal{S} \; | \; (A_{v^*} - I_3 \frac{\lambda}{\kappa}) x_\lambda = 0\}$. We further denote by $\mathcal{U}_\lambda^k$ the eigenspace associated with the generalized eigenvector $x_\lambda^k$ of rank $k \in \mathbb{N}_+$, forming part of a Jordan chain. This eigenspace consists of the vectors $x_\lambda^k \in \mathcal{S}$ that satisfy the condition $(-K\tilde L - I_n \lambda) x_\lambda^k = x_\lambda^{k-1}$, i.e.,
\begin{align} \label{eq: prop1_system_gen}
    \left[\begin{matrix}
        -\frac{\lambda}{\kappa} & v_x & v_y \\
        0 & v_{a_x} - \frac{\lambda}{\kappa} & v_{h_y} \\
        0 & v_{h_x} & v_{a_y} - \frac{\lambda}{\kappa} \\
    \end{matrix}\right]
    \underbrace{
    \left[\begin{matrix}
        c_1^k \\ c_2^k \\ c_3^k
    \end{matrix}\right]}_{[x_\lambda^k]_{\mathcal{B}_\mathcal{S}}} = 
    \underbrace{
        \left[\begin{matrix}
            c_1^{k-1} \\ c_2^{k-1} \\ c_3^{k-1}
        \end{matrix}\right]}_{[x_\lambda^{k-1}]_{\mathcal{B}_\mathcal{S}}} \frac{1}{\kappa}.
\end{align}
Note that a generalized eigenvector of rank $k=1$ is an ordinary eigenvector, in which case we recover the system of equations given by \eqref{eq: prop1_system}. The eigenspaces $\mathcal{U}_\lambda^k$ are dependent on the design of $v^*$ in \eqref{eq: v_star}, as $v^*$ strictly dictates the structure of $A_{v^*}$. Consequently, the following extensive analysis focuses on identifying all non-trivial $\mathcal{U}_\lambda^k$ for the cases presented in the statement:

\begin{enumerate}
    \item [C1)] Solving \eqref{eq: prop1_system} for $\lambda = 0$ yields $c_1$ as an arbitrary parameter, $c_2 = -\frac{v_{h_y}}{v_{a_x}}c_3$, and for $c_3$, the following two equations must be satisfied
    \begin{align} \label{eq: c3_system}
        \scalemath{0.9}{
        \begin{cases}
            \begin{array}{l}
                c_3 [v_{h_y}v_x - v_{a_x} v_y] = 0 \\
                c_3 [v_{a_y} v_{a_x} - v_{h_x}v_{h_y}] = 0.
            \end{array}
        \end{cases}
        }
    \end{align}
    Since in this case $l_\pm \neq 0$, we have that $v_{a_y} v_{a_x} \neq v_{h_x} v_{h_y}$, thus the only solution for \eqref{eq: c3_system} is $c_3 = 0$. Consequently, $\mathcal{U}_0 = \{c_1 \mathbf{1}_n \; | \; c_1 \in \mathbb{C}\}$ and $\mathbf{1}_n \in \mathcal{U}_0$ is a valid eigenvector for $\lambda = 0$. 
    
    On the other hand, solving \eqref{eq: prop1_system} for $\lambda = l_\pm$ yields $c_1 = \frac{\kappa}{l_\pm}(v_x c_2 + v_y c_3)$, $c_2 = \frac{v_{h_y}}{l_\pm/\kappa - v_{a_x}}$, and $c_3$ as an arbitrary parameter, i.e., 
    \begin{align*}
        \scalemath{0.9}{
        \begin{array}{l}
            \mathcal{U}_{l_\pm} = \left\{ \frac{\kappa}{l_\pm} c_3 \left[ \frac{v_{h_y} v_x}{l_\pm/\kappa - v_{a_x}} + v_y \right] \mathbf{1}_n + \right. \\
            \qquad \qquad \qquad \left. c_3 \left[\frac{v_{h_y}}{l_\pm/\kappa - v_{a_x}} \operatorname{Re}(p^*) + \operatorname{Im}(p^*)\right] \; | \; c_3 \in \mathbb{C} \right\}.
        \end{array}
        }
    \end{align*}
    Therefore, considering $c_3 = (l_\pm/\kappa - v_{a_x})$ yields $x_{l_\pm} \in \mathcal{U}_{l_\pm}$ in \eqref{eq: c1_lpm}, which is a valid pair of eigenvectors for the eigenvalues $l_\pm$. 

    In the case when $l_+ = l_- = l$, $v_{a_x} = v_{a_y}$ and $v_{h_x} = v_{h_y} = 0$, we have that $l = \frac{v_{a_x} + v_{a_y}}{2}$. Solving \eqref{eq: prop1_system} $c_1$ remains as before, but this time both $c_2$ and $c_3$ are arbitrary parameters, i.e., $\mathcal{U}_{l_\pm} = \{ \frac{2}{v_{a_x} + v_{a_y}} (v_x c_2 + v_y c_3)\mathbf{1}_n + c_2 \operatorname{Re}(p^*) + c_3\operatorname{Im}(p^*) \; | \; c_2,c_3 \in \mathbb{C}\}$. Therefore, considering $c_2 = (v_{a_x} + v_{a_y})/2, c_3 = 0$ and $c_2 = 0, c_3 = (v_{a_x} + v_{a_y})/2$ yields $x_{l_\pm} \in \mathcal{U}_{l_\pm}$ in \eqref{eq: c1_lp} and \eqref{eq: c1_lm}, which is a valid pair of linearly independent eigenvectors for the degenerated eigenvalue $l = l_+ = l_-$.

    \item [C2)] The analysis for $\lambda = 0$ remains as in C1). When $l_+ = l_- = l$, $v_{a_x} = v_{a_y}$, and either $v_{h_x} = 0$ or $v_{h_y} = 0$, solving \eqref{eq: prop1_system} yields $\mathcal{U}_{l}^1 = \{ f(\alpha) \; | \; \alpha \in \mathbb{C} \}$, where
    \begin{align*}
        \scalemath{0.9}{
        f(\alpha) =
        \begin{cases}
            \begin{array}{rl}
                \frac{2 v_x}{v_{a_x} + v_{a_y}} \alpha \mathbf{1}_n + \alpha \operatorname{Re}(p^*) & \mbox{if } v_{h_x} = 0 \\
                \frac{2 v_y}{v_{a_x} + v_{a_y}} \alpha \mathbf{1}_n + \alpha \operatorname{Im}(p^*) & \mbox{if } v_{h_y} = 0.
            \end{array}
        \end{cases}
        }
    \end{align*}
    Considering $\alpha = \kappa\frac{v_{a_x} + v_{a_y}}{2}$ leads us to $x_{l}^1 \in \mathcal{U}_{l}^1$ in \eqref{eq: c2_xl_1}, which is a valid generalized eigenvector of rank $k = 1$. On the other hand, the dimension of $\mathcal{U}_{l}^1$ is one order less than the algebraic multiplicity of $l$, so we have a non-trivial generalized eigenvector of rank $k=2$. Solving \eqref{eq: prop1_system_gen} yields
    $\mathcal{U}_{l}^2 = \{ g(\beta) \; | \; \beta \in \mathbb{C}\}$, where
    \begin{align*}
        \scalemath{0.7}{
        g(\beta) =
        \begin{cases}
            \begin{array}{rl}
                \begin{aligned}
                    &\frac{2}{\kappa (v_{a_x} + v_{a_y})} \left[ \beta v_x + \alpha \left(\frac{v_y}{v_{h_y}} - \frac{2 v_x}{v_{a_x} + v_{a_y}} \right)\right] \mathbf{1}_n \\
                    & \qquad\qquad + \frac{\beta}{\kappa} \operatorname{Re}(p^*) + \frac{\alpha}{\kappa v_{h_y}}\operatorname{Im}(p^*)
                \end{aligned}
                & \mbox{if } v_{h_x} = 0 \\
                \begin{aligned}
                    &\frac{2}{\kappa (v_{a_x} + v_{a_y})} \left[ \beta v_y + \alpha \left(\frac{v_x}{v_{h_x}} - \frac{2 v_y}{v_{a_x} + v_{a_y}} \right)\right] \mathbf{1}_n \\
                    & \qquad\qquad + \frac{\alpha}{\kappa v_{h_x}}\operatorname{Re}(p^*) + \frac{\beta}{\kappa}\operatorname{Im}(p^*)
                \end{aligned}
                & \mbox{if } v_{h_y} = 0.
            \end{array}
        \end{cases}
        }
    \end{align*}
    Therefore, considering the previous $\alpha$ for $\mathcal{U}_{l}^1$ and $\beta = \kappa$ yields $x_l^2$ in \eqref{eq: c2_xl_2}, which is a valid generalized eigenvector of rank $k = 2$. Indeed, $\{x_l^2, x_l^1\}$ is a Jordan chain.
   
    \item [C3)] The analysis for the nonzero $l_\pm$ remains as in C1). Regarding the eigenvalue $\lambda = 0$ with algebraic multiplicity 2, the solution of \eqref{eq: prop1_system} is the same as for the zero eigenvalue in C1). However, this time $v_{a_y} v_{a_x} = v_{h_x}v_{h_y}$ because one of the eigenvalues $l_\pm$ is equal to zero, and $v_{h_y} v_x = v_{a_x} v_y$, so $c_3$ is also an arbitrary parameter. Therefore, $\mathcal{U}_0 = \{\alpha \mathbf{1}_n + \beta \left[ \frac{- v_y}{v_x} \operatorname{Re}(p^*) + \operatorname{Im}(p^*)\right] \; | \; \alpha,\beta \in \mathbb{C}\}$. Considering $\alpha = 1, \beta = 0$ and $\alpha = 0, \beta = -v_x$, yields $\mathbf{1}_n$ and $x_0$ in \eqref{eq: c3_x0}, respectively. These two eigenvector associated with $\lambda = 0$ are linearly independent, so they span $\mathcal{U}_0$. Additionally, note that $v_{h_y} v_x = v_{a_x} v_y$ implied that $v_{h_x} v_y = v_{a_y} v_x$, as the condition $v_{a_y} v_{a_x} = v_{h_x} v_{h_y}$ ensures consistency between the two expressions.

    \item [C4)] The analysis remains as in C3). However, since in this case $v_{h_y} v_x \neq v_{a_x} v_y$, solving \eqref{eq: prop1_system} for $\lambda = 0$ yields $c_3 = 0$, i.e., $\mathcal{U}_0^1 = \{c_1 \mathbf{1}_n \; | \; c_1 \in \mathbb{C}\}$. Hence, setting $\alpha = \kappa [v_x v_{h_y} - v_{a_x} v_y]$ yields $\kappa [v_x v_{h_y} - v_{a_x} v_y] \mathbf{1}_n$, which is a valid generalized eigenvector of rank 1. Solving \eqref{eq: prop1_system_gen} for $k=2$ yields $\mathcal{U}_0^2 = \{ \beta \mathbf{1}_n + \frac{\alpha}{\kappa [v_{a_x} v_y - v_x v_{h_y}]}(-v_{h_y} \operatorname{Re}(p^*) + v_{a_x} \operatorname{Im}(p^*)) \; | \; \beta \in \mathbb{C}\}$. Considering the previous $\alpha$ and choosing $\beta = 0$ yields $x_0^2$ in \eqref{eq: c4_x0_2}, which is a valid generalized eigenvector of rank 2, thus $\{x_0^2, x_0^1\}$ is a Jordan chain.

    \item [C5)] When either $v_{h_x},v_x = 0$ or $v_{h_y},v_y = 0$, solving \eqref{eq: prop1_system} for $\lambda = 0$ yields $\mathcal{U}_0 = \{\alpha \mathbf{1}_n + \beta \operatorname{Re}(p^*)\; | \; \alpha,\beta \in \mathbb{C}\}$ if $v_{h_x}, v_x = 0$, and $\mathcal{U}_0 = \{\alpha \mathbf{1}_n + \beta \operatorname{Im}(p^*)\; | \; \alpha,\beta \in \mathbb{C}\}$ if $v_{h_y}, v_y = 0$. Then, solving \eqref{eq: prop1_system_gen} with $x_0^1 \in \mathcal{U}_0$ leads to $\mathcal{U}_0^2 = \{\gamma \mathbf{1}_n + f(\rho)\; | \; \gamma,\rho \in \mathbb{C}\}$, where
    $$
    \scalemath{0.9}{
    f(\rho) =
    \begin{cases}
        \begin{array}{rl}
            \rho \operatorname{Re}(p^*) + \operatorname{Im}(p^*)
            & \mbox{if } v_{h_x},v_x = 0 \\
            \operatorname{Re}(p^*) + \rho \operatorname{Im}(p^*)
            & \mbox{if } v_{h_y},v_y = 0 \\
        \end{array}
    \end{cases}
    }
    $$
    and requires $\alpha = \kappa v_y$ and $\beta = \kappa v_{h_y}$ if $v_{h_x}, v_x = 0$, or  $\alpha = \kappa v_x$ and $\beta = \kappa v_{h_x}$ if $v_{h_y}, v_y = 0$. These conditions yield $x_0^1$ as given in \eqref{eq: c5_x01}. The vector in $\mathcal{U}_0$ orthogonal to $x_0^1$ produces $y_0$ as given in \eqref{eq: c5_y0}, a valid eigenvector for the non-degenerated eigenvalue. Finally, setting $\gamma = \rho = 0$ gives $x_0^2$ in \eqref{eq: c5_x02}.
    
    When $v_{h_x}, v_{h_y} = 0$ and either $v_x \neq 0$ or $v_y \neq 0$, a potential solution to \eqref{eq: prop1_system} is given by $\mathcal{U}_0 = \{\alpha \mathbf{1}_n + \beta [\operatorname{Re}(p^*) - \frac{v_x}{v_y}\operatorname{Im}(p^*)] \; | \; \alpha, \beta \in \mathbb{C} \}$. On the other hand, solving \eqref{eq: prop1_system_gen} with $x_0^1 \in \mathcal{U}_0$ results in $\mathcal{U}_0^2 = \{\gamma \mathbf{1}_n + \rho\left[ \left( \frac{\alpha}{\kappa v_x} - \frac{v_y}{v_x} \right) \operatorname{Re}(p^*) + \operatorname{Im}(p^*)\right] \; | \; \gamma, \rho \in \mathbb{C}\}$, requiring $\beta = 0$, i.e., $\mathcal{U}_0^1 = \{ \alpha \mathbf{1}_n \; | \; \alpha \in \mathbb{C}\}$. Consequently, the eigenspace associated with the non-degenerated eigenvalue is $\mathcal{U}_0^\prime = \mathcal{U}_0 - \mathcal{U}_0^1 = \{ \beta [\operatorname{Re}(p^*) - \frac{v_x}{v_y}\operatorname{Im}(p^*)] \; | \; \beta \in \mathbb{C} \}$. Setting $\alpha = \kappa (v_x + \iota v_y)$, $\gamma = 0$ and $\rho = v_x$ yields the Jordan chain $\{x_0^2, x_0^1\}$, with $x_0^2 \in \mathcal{U}_0^2$ and $x_0^1 \in \mathcal{U}_0^1$ as in \eqref{eq: c5_x01} and \eqref{eq: c5_x02}. Additionally, choosing $\beta = v_y$ provides $y_0 \in \mathcal{U}_0^\prime$ in \eqref{eq: c5_y0}.

    \item [C6)] Solving \eqref{eq: prop1_system} for the eigenvalue $\lambda = 0$ with algebraic multiplicity 3 yields $\mathcal{U}_0^1 = \{ \alpha \mathbf{1}_n \; | \; \alpha \in \mathbb{C}\}$. Then, solving \eqref{eq: prop1_system_gen} for $k = 2$ yields $\mathcal{U}_0^2 = \{ f(\beta) \; | \; \beta \in \mathbb{C}\}$, and solving for $k = 3$ yields $\mathcal{U}_0^3 = \{ g(\gamma) \; | \; \gamma \in \mathbb{C}\}$, where
    \begin{align*}
        \scalemath{0.9}{
        f(\beta) =
        \begin{cases}
            \begin{array}{rl}
                \beta \mathbf{1}_n + \frac{\alpha}{\kappa v_y} \operatorname{Im}(p^*)
                & \mbox{if } v_{h_x},v_y \neq 0 \\
                \beta \mathbf{1}_n + \frac{\alpha}{\kappa v_x} \operatorname{Re}(p^*)
                & \mbox{if } v_{h_y},v_x \neq 0,
            \end{array}
        \end{cases}
        }
    \end{align*}
    \begin{align*}
        \scalemath{0.85}{
        g(\gamma) =
        \begin{cases}
            \begin{array}{rl}
                \begin{aligned}
                    &\gamma \mathbf{1}_n + \frac{\alpha}{\kappa^2 v_y v_{h_x}} \operatorname{Re}(p^*)\\
                    &\qquad + \left( \frac{\beta}{v_y} - \frac{v_x \alpha}{\kappa^2 v_y^2 v_{h_x}}\right) \operatorname{Im}(p^*)
                \end{aligned}
                & \mbox{if } v_{h_x},v_y \neq 0 \\
                \begin{aligned}
                    &\gamma \mathbf{1}_n + \frac{\alpha}{\kappa^2 v_x v_{h_y}} \operatorname{Im}(p^*) \\
                    &\qquad + \left( \frac{\beta}{v_x} - \frac{v_y \alpha}{\kappa^2 v_x^2 v_{h_y}}\right) \operatorname{Re}(p^*)
                \end{aligned}
                & \mbox{if } v_{h_y},v_x \neq 0.
            \end{array}
        \end{cases}
        }
    \end{align*}
    Considering $\alpha = \kappa^2 v_y^2 v_{h_x}$ when $v_{h_x}, v_y \neq 0$, or $\alpha = \kappa^2 v_x^2 v_{h_y}$ when $v_{h_y}, v_x \neq 0$, along with $\beta = \kappa v_x v_y$ and $\gamma = 0$, yields the Jordan chain $\{x_0^3, x_0^2, x_0^1\}$, with $x_0^3, x_0^2$ and $x_0^1$ as in \eqref{eq: c6_x01}-\eqref{eq: c6_x03}.

\end{enumerate}

\end{appendix}


\bibliographystyle{IEEEtran}
\bibliography{biblio}


\begin{IEEEbiography}[{\includegraphics[width=1in, height=1.25in, clip,keepaspectratio, trim={0cm 0cm 0cm 0cm}]{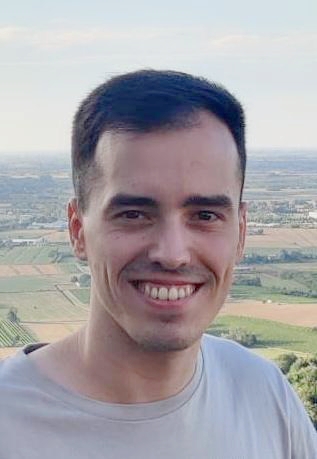}}]{Jesus Bautista Villar} (Student IEEE) received his B.S. degree in Physics from the Complutense University of Madrid, Spain, in 2022 and his M.S. degree in Data Science and Computer Engineering from the University of Granada, Spain, in 2023. He is currently pursuing a Ph.D. in Information and Communication Technologies, specializing in Cognitive Systems and Robotics, at the University of Granada, Spain.
\end{IEEEbiography}

\begin{IEEEbiography}[{\includegraphics[width=1in, height=1.25in, clip,keepaspectratio, trim={0cm 3cm 0cm 0cm}]{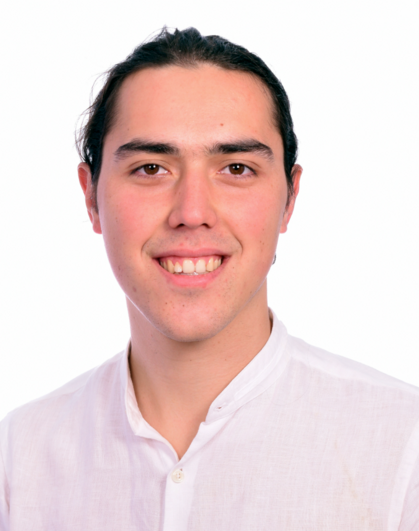}}]{Enric Morella Violeta} (Student IEEE) completed his double B.S. degree in Physics and Mathematics from the Complutense University of Madrid, Spain, in 2024 and and his M.S. degree in Data Science and Computer Engineering from the University of Granada, Spain, in 2025.
\end{IEEEbiography}

\begin{IEEEbiography}[{\includegraphics[width=1in,height=1.25in,clip,keepaspectratio]{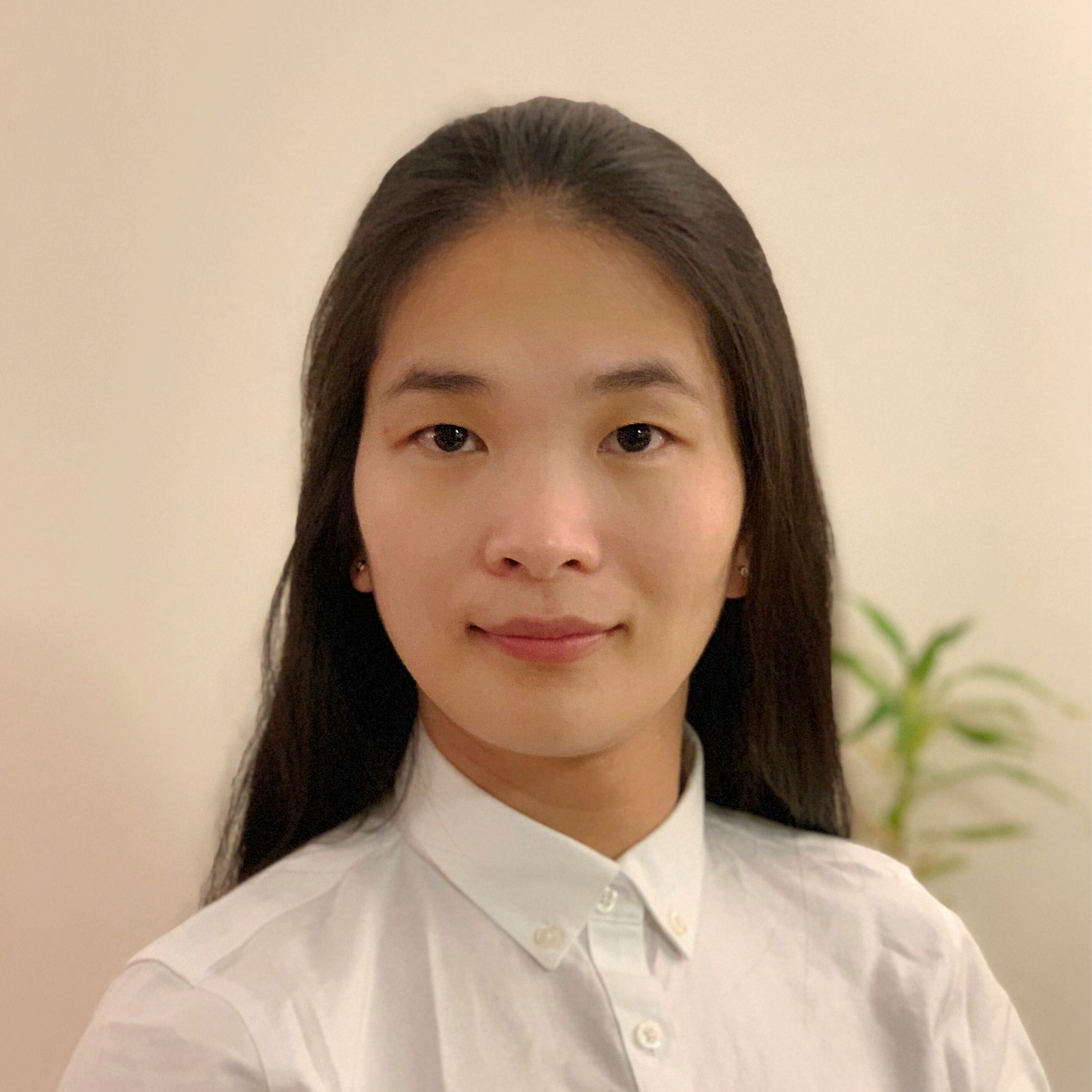}}]{Lili Wang} (Member IEEE) received the B.E. and M.S. degrees in electrical engineering from Zhejiang University, Zhejiang, China, in 2011 and 2014, respectively, and the Ph.D. degree in electrical engineering from Yale University, New Haven, CT, USA, in 2020. She is currently an Associate Professor at School of Automation and Intelligent Manufacturing, Southern University of Science and Technology, Shenzhen, China. Her research interests include the topic of cooperative multiagent systems, distributed computation and estimation, distributed control, and social networks.
\end{IEEEbiography}

\begin{IEEEbiography}[{\includegraphics[width=1in,height=1.25in,clip,keepaspectratio]{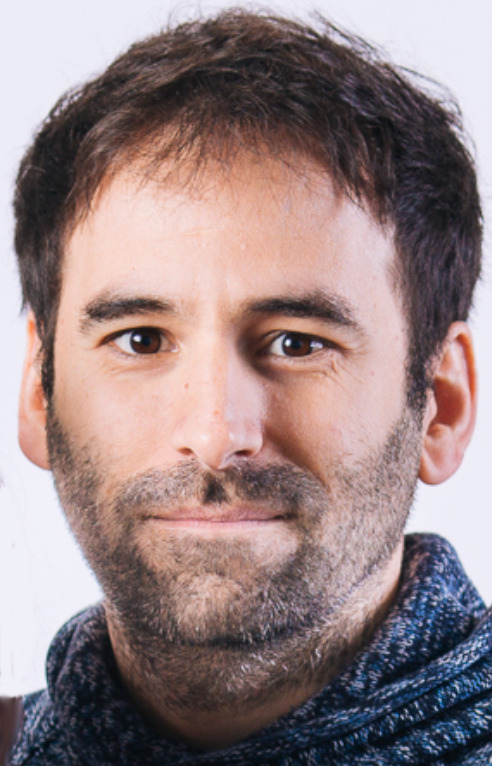}}]{Hector Garcia de Marina} (Member IEEE) received the Ph.D. degree in systems and control from the University of Groningen, The Netherlands, in 2016. He is currently a Ramón y Cajal Researcher with the Department of Computer Engineering, Automation and Robotics, and with the Institute of Mathematics (IMAG) at the Universidad de Granada, Spain. He is the recipient of an ERC Starting Grant and an Associate Editor for IEEE Transactions on Robotics. His current research interests include multiagent systems and the design of guidance navigation and control systems.
\end{IEEEbiography}

\end{document}